\documentclass[conference,compsoc]{IEEEtran}

\usepackage{booktabs}
\usepackage{pifont}
\newcommand{\cmark}{\ding{51}}
\newcommand{\xmark}{\ding{55}}
\usepackage{adjustbox}
\usepackage{amsmath, amssymb, amsthm}
\theoremstyle{plain}
\newtheorem{theorem}{Theorem}
\usepackage{tablefootnote}

\ifCLASSOPTIONcompsoc
  \usepackage[nocompress]{cite}
\else
  \usepackage{cite}
\fi
\ifCLASSINFOpdf
\else
\fi
\usepackage{hyperref}
\hypersetup{ 
    colorlinks=true,   
    citecolor=blue,    
    linkcolor=blue,    
    urlcolor=blue      
}
\usepackage{cleveref}
\begin{document}
%
\title{Quantifying Membership Disclosure Risk\\for Tabular Synthetic Data Using Kernel Density Estimators}



%
\author{\IEEEauthorblockN{Rajdeep Pathak\IEEEauthorrefmark{1}\IEEEauthorrefmark{2},
Amit Basak\IEEEauthorrefmark{3} and
Sayantee Jana\IEEEauthorrefmark{3}}
\IEEEauthorblockA{\IEEEauthorrefmark{1}SCAI, Sorbonne Universit\'e, Paris, France}
\IEEEauthorblockA{\IEEEauthorrefmark{2}SAFIR, Sorbonne University Abu Dhabi, UAE
}
\IEEEauthorblockA{\IEEEauthorrefmark{3}Indian Institute of Technology Hyderabad, India}}


\maketitle

\begin{abstract}
The use of synthetic data has become increasingly popular as a privacy-preserving alternative to sharing real datasets, especially in sensitive domains such as healthcare, finance, and demography. However, the privacy assurances of synthetic data are not absolute, and remain susceptible to membership inference attacks (MIAs), where adversaries aim to determine whether a specific individual was present in the dataset used to train the generator. In this work, we propose a practical and effective method to quantify membership disclosure risk in tabular synthetic datasets using kernel density estimators (KDEs). Our KDE-based approach models the distribution of nearest-neighbor distances between synthetic data and the training records, allowing probabilistic inference of membership and enabling robust evaluation via ROC curves. We propose two attack models: a `True Distribution Attack' which assumes privileged access to training data, and a more realistic, implementable `Realistic Attack' that uses auxiliary data without true membership labels. Empirical evaluations across four real-world datasets and six synthetic data generators demonstrate that our method consistently achieves higher F1 scores and sharper risk characterization than a prior baseline approach, without requiring computationally expensive shadow models. We would like to highlight that the proposed method is $200\text{K}\times$ faster on average as compared to state-of-the-art shadow model-based MIAs, while preserving attack accuracy. Our approach, therefore, provides a practical framework and metric for quantifying membership disclosure risk in synthetic data, which enables data custodians to conduct a post-generation risk assessment prior to releasing their synthetic datasets for downstream use. The datasets and codes for this study are available at \url{https://github.com/PyCoder913/MIA-KDE}.
\end{abstract}


%
\IEEEpeerreviewmaketitle

\section{Introduction} \label{sec:Introduction}
Imagine sharing rich and sensitive healthcare, financial, or demographic data without risking individual privacy: synthetic data generation makes this possible. By training statistical or machine learning models on real data to simulate datasets, synthetic data aims to retain analytical utility while protecting individual identities. However, despite growing adoption, synthetic datasets remain susceptible to membership inference attacks (MIAs) \cite{chen2020gan}, where adversaries aim to infer whether specific individuals were present in the original training data. This is particularly concerning when the inclusion of an individual in the dataset reveals sensitive information, thereby breaching privacy (e.g., HIV status, rare diseases, financial default).

The scope of membership inference attacks has recently expanded beyond traditional classifiers and early generative models, exposing critical vulnerabilities across diverse machine learning paradigms. Contemporary research has demonstrated successful MIAs against transfer learning frameworks \cite{wu2024rethinking}, sequence models \cite{rossi2025membership}, recommender systems \cite{he2025membership}, pre-trained large language models (LLMs) \cite{he2025towards}, vision-language models \cite{hu2025membership}, video learning models \cite{li2026vid}, and graph neural networks \cite{lassila2026practical}. However, the vast majority of this recent literature focuses on formulating increasingly potent attacks against `models' from the perspective of an external adversary aiming to breach privacy. While adversarial exploits effectively highlight model vulnerabilities, data custodians tasked with sharing synthetic data (generated using any classical or deep generative method) require actionable, post-generation risk scoring metrics. Such defensive tools are essential to systematically quantify membership disclosure risks in synthetic datasets prior to their public release. 

Recent MIA strategies against synthetic data can be generally classified into three major categories: shadow model-based, density estimation-based, and distance-based. However, they often force a trade-off between computational efficiency and predictive utility. State-of-the-art shadow modeling \cite{shokri2017MIA_ML_2, stadler2022synthetic, guepin2023synthetic} yields robust membership predictions, but incurs severe computational overhead due to the requisite training of multiple shadow generators. Alternatively, distance-based methods \cite{el2022MIA_synth, mendelevitch2021fidelity, choi2017generating} drastically reduce this overhead through nearest-neighbor heuristics, yet output strict binary membership labels and utilize average-case metrics like F1 score, that prohibit uncertainty quantification and downstream ROC evaluations. ROC analysis is important to assess the true positive membership classification rate at low false positive rate (FPR) regimes, since average-case metrics under-represent attack risks \cite{carlini2022MIA_ML_3}. Consequently, there is a need for an optimal methodology to quantify MIA risk for tabular synthetic datasets, that bridges this gap by delivering the probabilistic granularity and attack strength of shadow models with the computational tractability of distance-based techniques.

To this end, we introduce a non-parametric framework using Kernel Density Estimators (KDEs) \cite{parzen1962estimation} to independently model the empirical distance distributions of members and non-members relative to synthetic data. For a given query record, its nearest-neighbor distance in the synthetic set is computed, and Bayes' theorem is applied to the fitted densities to calculate the posterior probability of training set membership, $\mathbf{P}(\text{member}\vert{}d)$. Our approach generates probabilistic membership predictions, enabling improved analysis through ROC curves at low FPRs while remaining computationally efficient. The proposed method differs from each of the existing approaches in several important ways. First, unlike shadow modeling attacks, our framework requires no surrogate generator training, making it considerably easier to deploy while remaining generator-independent. Second, unlike density estimation-based methods like DOMIAS \cite{van2023membership}, which estimates densities in the original feature space, the proposed method models the one-dimensional distribution of nearest-neighbor distances, substantially simplifying density estimation while preserving informative signals for membership inference. Finally, the proposed approach replaces the threshold-based decision in distance-based methods with a probabilistic formulation, allowing robust risk evaluation via ROC analysis.

\begin{table}[!t]
\centering
\caption{Qualitative comparison of representative membership inference attacks against synthetic tabular data.}
\label{tab:comparison}
\begin{adjustbox}{width=\linewidth}
\begin{tabular}{lccccc}

\toprule
Method &
Shadow &
Generator &
Probabilistic &
ROC &
Computation \\

& Models & Retraining & Output & Analysis & \\
\midrule
Shadow model-based \cite{shokri2017membership} & \cmark & \cmark & \cmark & \cmark & High \\
Density-based (DOMIAS; \cite{van2023membership}) & \xmark & \xmark & \cmark & \cmark & Moderate \\
Distance-based \cite{el2022MIA_synth} & \xmark & \xmark & \xmark & \xmark & Low \\
\textbf{Proposed Framework} & \xmark & \xmark & \cmark & \cmark & Low \\
\bottomrule

\end{tabular}
\end{adjustbox}
\end{table}

We propose two new attack variants: a \textit{True Distribution Attack} that leverages the real training data to explicitly model nearest-neighbor distances, and a \textit{Realistic Attack} executable by an adversary possessing limited auxiliary data. We empirically validate the robustness of our approach against the standard distance-based attack \cite{el2022MIA_synth} across four real-world datasets (MIMIC-IV, UK Census, Texas100X, and Nexoid COVID-19) and six synthetic data generators spanning a variety of deep generative paradigms -- GAN-based CTGAN, ADS-GAN, and DPGAN, diffusion-based TabDDPM, VAE-based TVAE, and directed acyclic graph (DAG)-based Bayesian Network. Moreover, we perform rigorous computational comparisons, demonstrating that our framework requires significantly less overhead than state-of-the-art shadow modeling attacks, rendering it highly practical for operational deployment. In summary, our main contributions are: (1) proposing a new KDE-based privacy scoring framework for tabular synthetic data to accurately quantify membership disclosure risk; (2) providing an extensive empirical evaluation of our methodology across diverse datasets and generation models; and (3) achieving superior attack efficacy (via F1 scores) compared to traditional data-partitioning methods, without the computational expense of shadow model training.

\section{Background and Related Work} \label{sec:related_work}

\subsection{Synthetic Data Generation}
Synthetic data is generated by first training a statistical or machine learning model on the original dataset, and then producing artificial (synthetic) records by sampling new values from the learned model. Contemporary literature highlights diverse methodologies for synthetic data generation. Bayesian networks (BayNet, PrivBayes \cite{zhang2017privbayes}) model attribute dependencies via DAGs, while Generative Adversarial Networks (GANs) (e.g., CTGAN \cite{xu2019modeling}) utilize adversarial training despite inherent challenges with tabular data. Generative alternatives like Variational Autoencoders (VAEs) \cite{vardhan2020generating, ma2020vaem} and diffusion models \cite{kotelnikov2023tabddpm} have also become prominent. Most recently, LLMs have demonstrated strong capabilities in capturing real-world tabular distributions. However, they face practical limitations; frameworks like GReaT \cite{borisov2023language} suffer from slower convergence than CTGAN, while methods such as TAPTAP \cite{zhang2023generative}, TabuLa \cite{zhao2025tabula}, and TabMT \cite{gulati2023tabmt} necessitate dedicated fine-tuning to effectively model target distributions.


\subsection{Membership Inference Attacks}

\begin{figure*}[t]
    \centering
    \includegraphics[width=0.8\textwidth]{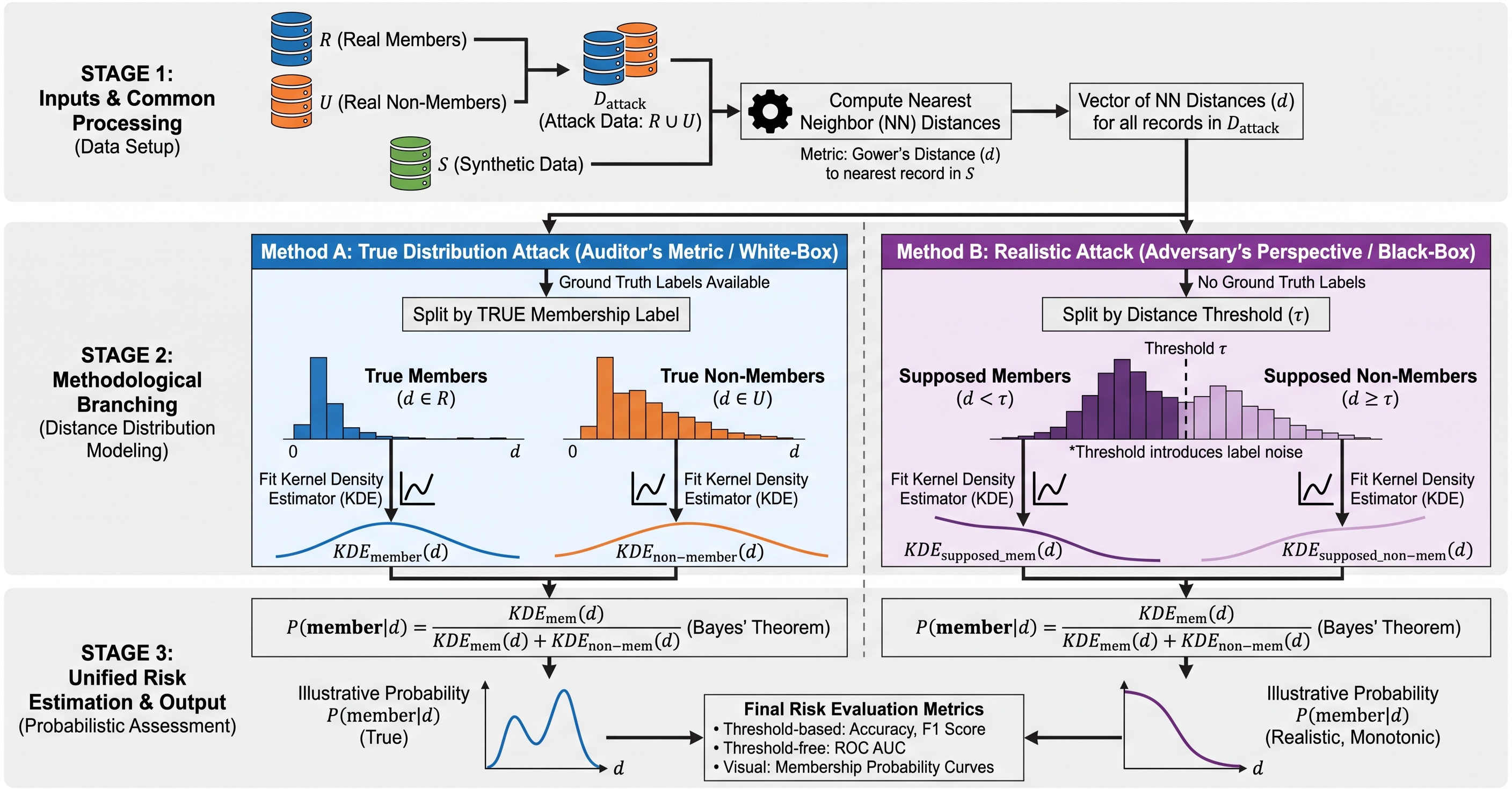}
    \caption{Overview of the proposed methodology: True Distribution Attack and Realistic Attack for quantifying membership disclosure risk for tabular synthetic data.}
    \label{fig:mia_method}
\end{figure*}
Membership inference attacks (MIAs) have emerged as a standard approach for assessing the privacy risks associated with synthetic data and machine learning models. For synthetic tabular data, one class of MIAs involves direct comparisons between synthetic and original records to identify exact or near matches \cite{yale2019assessing}. However, these similarity-based approaches have been criticized for underestimating privacy risks \cite{stadler2022synthetic}. 

The state-of-the-art alternatives are shadow modeling approaches \cite{shokri2017MIA_ML_2, stadler2022synthetic, guepin2023synthetic}, which are practically infeasible for large datasets due to their heavy computational overhead. For instance, consider a data custodian who routinely generates and distributes synthetic datasets derived from proprietary, high-dimensional, and dynamic data containing billions of records, refreshed weekly. To assess privacy risk before each release, employing shadow modeling would require training numerous additional generative models -- an approach that is both time and resource-intensive. Given the dynamic nature of the data, any delay in release diminishes its utility and commercial value to potential users. 

A more popular approach implemented in practice to evaluate membership disclosure risks of synthetic data is a data-partitioning and distance-based approach \cite{mendelevitch2021fidelity, el2022MIA_synth, choi2017generating, chen2020gan, zhang2020ensuring, goncalves2020generation, yan2021generating}, which we elaborate in Section \ref{sec:methods}. It utilizes the nearest-neighbor distance of a record from a synthetic data to determine whether the individual was a part of the real training data. Our proposed method is grounded on the distance-based method, where we model the distance distributions using KDEs.

\section{Methodology} \label{sec:methods}


Membership inference attacks simulate a classification task where adversaries classify records as training set members or non-members. Given real dataset $R$ (training set) generating synthetic data $S$, and unseen data $U$ (not used in training), we form an attack dataset $D_{\text{attack}}$ comprising records from $R$ and $U$. The standard distance-based approach in \cite{mendelevitch2021fidelity, el2022MIA_synth, choi2017generating, chen2020gan}, which we hereafter refer to as \textbf{Method 1}, sets a distance threshold $\tau$, computes nearest-neighbor distances between records in $D_{\text{attack}}$ and synthetic data $S$, and classifies records below the threshold as members, and others as non-members. Finally, the F1 score for this classification quantifies the membership disclosure risk. The intuition behind this method is that, records in the attack (auxiliary) dataset closer to the synthetic dataset are more likely to be members of the real training data.

However, \cite{carlini2022MIA_ML_3} argue that average-case metrics like F1 scores underestimate privacy risks by masking worst-case leakage scenarios. They recommend ROC analysis evaluating true positive rates (TPRs) at low false positive rates (FPRs). Method 1 produces only hard classifications, and not probabilistic membership scores needed for comprehensive ROC analysis. In many settings, it is beneficial to have a probability associated with the class label. Membership probabilities provide a sense of confidence, the degree of uncertainty, or the likelihood of being incorrect in classifying records as members.

We introduce a Kernel Density Estimation (KDE)-based framework that builds upon the structural foundations of Method 1 to yield probabilistic membership predictions. Specifically, our approach models the empirical relationship between nearest-neighbor distances and membership likelihoods utilizing the real training records, the synthetic dataset, and an unseen holdout set. This formulation is inherently generator-agnostic; it can be applied to synthetic data produced by any generative paradigm without necessitating the computationally prohibitive retraining of auxiliary models, thereby offering a highly efficient alternative to resource-intensive shadow modeling techniques. We present two attack variants: the \textbf{True Distribution Attack}, leveraging data holders' privileged access to training data for risk assessment, and the \textbf{Realistic Attack}, using auxiliary datasets accessible to adversaries, which was also a requirement in existing literature \cite{el2022MIA_synth, shokri2017MIA_ML_2}. Note that, the interpretation of membership inference risk is two-fold: low risk may indicate genuine privacy protection, or at the same time, reflect weak attack models that underestimate actual leakage. 

\subsection{True Distribution Attack} \label{sec:Upper_bound_attack}
Let $R$ denote the real training dataset, $S$ be the synthetic dataset, and $U$ be the unseen real records not used during training. We construct the attack dataset $D_{\text{attack}}$ by combining $R$ and $U$, with corresponding ground truth labels: $R$ records as members (class = 1) and $U$ records as non-members (class = 0). We randomly split $D_{\text{attack}}$ into balanced training and test sets -- each set has an equal number of members $(\in R)$ and non-members $(\in U)$. To facilitate an interpretable risk comparison, this approach utilizes a balanced dataset where a random-guessing adversary would achieve a 50\% baseline accuracy and a 0.67 F1 score. This establishes a foundational benchmark for evaluating the strength of our proposed attack.

We compute the Gower's distance\footnote{Absolute difference (Manhattan distance) for numerical features; 0 for matching categories, 1 otherwise.} \cite{Gower1971} between each record in $D_{\text{attack}}$ and its nearest neighbor in $S$, creating a distance matrix $D_{\text{dists}}$ with computed distances and corresponding membership labels. We fit separate KDEs to member distances ($KDE_{\text{member}}$ for distances corresponding to records originally in $R$) and non-member distances ($KDE_{\text{non-member}}$ for distances corresponding to records originally in $U$), providing smooth approximations of empirical distance distributions for each class. For evaluation, we classify test records by computing their nearest-neighbor distance $d$ to the synthetic dataset and estimating membership probability using:
\begin{equation} \label{eq:membership_probability}
    \mathbf{P}(\text{member}|d) = \frac{KDE_{\text{member}}(d)}{KDE_{\text{member}}(d) + KDE_{\text{non-member}}(d)}.
\end{equation}

This estimator, formally justified by Bayes' Theorem in \Cref{proposition:prob_justification}, models membership likelihood without applying thresholds on the distance values. Finally, for obtaining a membership label, this membership probability is thresholded (e.g., $\mathbf{P}(\text{member}|d) \geq 0.5$) to classify whether the test record is a member of the real training dataset. The probabilistic output quantifies membership uncertainty corresponding to each test query, as well as enables robust risk evaluation using both average-case classification metrics (accuracy, F1 score) and comprehensive ROC analysis. A visual illustration of the method is provided in Figure \ref{fig:mia_method}.

\begin{theorem} \label{proposition:prob_justification}
    Following the above setup, assume that $KDE_{\text{member}}$ and $KDE_{\text{non-member}}$ approximates the distributions of member distances and non-member distances, respectively. Then for a given query record and the distance $d$ of its nearest neighbor in the synthetic data, the probability that it is a member of the training set is given by $$\mathbf{P}(\text{member}|d) = \frac{KDE_{\text{member}}(d)}{KDE_{\text{member}}(d) + KDE_{\text{non-member}}(d)}.$$
\end{theorem}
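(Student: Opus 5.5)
The plan is to apply Bayes' theorem directly to the random pair (membership label, nearest-neighbor distance) induced by drawing a query uniformly from the balanced attack set. Let $M\in\{0,1\}$ denote the membership label of a query record drawn from $D_{\text{attack}}$. Let $D$ denote the Gower distance from that record to its nearest neighbor in $S$. Let $f_1$ and $f_0$ be the conditional densities of $D$ given $M=1$ and $M=0$; by hypothesis these are approximated by $KDE_{\text{member}}$ and $KDE_{\text{non-member}}$. The balanced construction of the attack training and test sets, with equal numbers of records from $R$ and $U$, gives the prior $\mathbf{P}(M=1)=\mathbf{P}(M=0)=1/2$.

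First, because $D$ is continuous, I would pass from events $\{D=d\}$ to the shrinking events $\{D\in[d,d+h]\}$. Bayes' theorem then gives
\[
\mathbf{P}(M=1\mid D\in[d,d+h])=\frac{\mathbf{P}(D\in[d,d+h]\mid M=1)\,\mathbf{P}(M=1)}{\sum_{m\in\{0,1\}}\mathbf{P}(D\in[d,d+h]\mid M=m)\,\mathbf{P}(M=m)}.
\]
Dividing the numerator and denominator by $h$ and letting $h\to 0$, at points $d$ where $f_0,f_1$ are continuous and $f_0(d)+f_1(d)>0$, yields
\[
\mathbf{P}(\text{member}\mid d)=\frac{f_1(d)\pi_1}{f_1(d)\pi_1+f_0(d)\pi_0}.
\]

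Second, I substitute the equal priors $\pi_1=\pi_0=1/2$, which cancel. Replacing $f_1$ and $f_0$ by their kernel estimates then gives exactly \eqref{eq:membership_probability}. I would also remark that the map $(a,b)\mapsto a/(a+b)$ is continuous wherever $a+b>0$. Hence approximation errors in the KDEs translate into correspondingly small errors in the posterior, which makes precise the meaning of ``approximates'' in the statement. In particular, if the KDEs converge pointwise, the posterior converges as well.

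The main subtlety is not the algebra but justifying the conditioning on a measure-zero event and the role of the equal-prior assumption. I would handle the first through the limiting argument above, or equivalently by citing the standard density form of Bayes' rule. For the second, I would state explicitly that without balance the formula must carry the weights $\pi_1,\pi_0$. I would also note that Gower distances may have atoms, since exact matches give $d=0$ with positive probability. In that case the same argument goes through with densities taken with respect to a common dominating measure, or one simply assumes continuity of the distance distributions.
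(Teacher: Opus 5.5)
Your proposal is correct and follows essentially the same route as the paper: Bayes' theorem with the law of total probability, the equal-prior assumption $\mathbf{P}(\text{member})=\mathbf{P}(\text{non-member})=1/2$ cancelling out, and substitution of the KDEs for the class-conditional distance distributions. Your limiting argument for conditioning on $\{D=d\}$, the continuity remark on $(a,b)\mapsto a/(a+b)$, and the note on atoms (relevant, e.g., for the purely categorical UK Census distances) add rigor that the paper's proof leaves out, since it treats $\mathbf{P}(d\mid\text{member})$ informally as a probability.
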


The proof is provided in supplementary Section \ref{appendix:proof}. The true distribution attack requires privileged access to the real training data $R$, and serves as our main proposal for an \textit{Auditor's Metric}. Data custodians can utilize this method as a white-box privacy validator and a post-generation evaluation metric. 

\subsection{Realistic Attack} \label{sec:Realistic_attack}
In practical scenarios, adversaries will not have access to the true membership labels, making direct modeling of member and non-member distance distributions impossible. We propose the \textbf{Realistic Attack}, where adversaries access only auxiliary datasets from the same population, potentially containing mixed training and non-training records acquired through public sources. The requirement of such an auxiliary dataset is also assumed in prior works \cite{shokri2017MIA_ML_2, el2022MIA_synth} and it does not affect the validity of our approach. Our objective is providing data custodians with practical risk assessment methods, and not creating new, potentially stronger, MIAs per se. 

We propose the realistic attack using $D_{\text{attack}}$ (mixture of $R$ and $U$ without the true membership labels) as auxiliary data. Without true labels, we partition the records into \textbf{`supposed members'} (with distance below a threshold) and \textbf{`supposed non-members'} (above the threshold), based solely on nearest-neighbor distances to synthetic data. Unlike Method 1, these labels are not directly used for classification. Instead, we fit separate KDEs to the supposed member and supposed non-member distances, and evaluate membership probability of a query record using \eqref{eq:membership_probability}, enabling probabilistic assessment under realistic assumptions.

Consequently, this formulation yields membership probabilities that decrease monotonically as a function of nearest-neighbor distance. Because the adversary lacks access to ground-truth labels, the initial data partitioning inherently introduces label noise; the sets of supposed members and non-members contain unavoidable cross-class contamination, leading to minor modeling inaccuracies within the density estimators. Despite this noise, empirical evaluations demonstrate that at elevated distance thresholds (exceeding the 50th percentile), our Realistic Attack achieves significantly higher F1 scores than the established distance-based baseline (Method 1) \cite{el2022MIA_synth}. By demonstrating superior attack efficacy relative to existing methodologies, our framework furnishes data custodians with a more rigorous metric for accurately quantifying membership disclosure risk.


\section{Experimental Results} \label{sec:results}
We use four publicly available datasets to demonstrate our method: MIMIC-IV \cite{johnson2020mimic, johnson2023mimic}, UK Census \cite{ONS2011Census}, Texas-100X \cite{texas2006inpatient}, and Nexoid COVID-19 \cite{nexoid2020covid19}. These are popular healthcare and demographic datasets, extensively used in contemporary literature \cite{guepin2023synthetic, pamisetty2024ehrgpt}. We partition each of the datasets into two equal parts to construct the attack dataset: one half ($R$) is used to train the synthetic data generators and generate $S$, while the remaining half ($U$), unseen during training, is used as the unseen (non-members) set. 

We use six different generation techniques for comprehensive evaluation: CTGAN \cite{xu2019modeling}, ADS-GAN \cite{yoon2020anonymization}, DPGAN \cite{xie2018differentially}, TabDDPM \cite{kotelnikov2023tabddpm}, TVAE \cite{xu2019modeling}, and Bayesian Network \cite{young2009using}, which are trained on the real data $R$. The generators are implemented using the SynthCity framework \cite{synthcity}. More details about the datasets and generation mechanisms are provided in the supplementary Sections \ref{appendix:datasets} and \ref{appendix:generators}, respectively.

We evaluate both attack variants across these 24 datasets using Gower's distance between $D_{\text{attack}}$ and $S$, measuring MIA risk via attack accuracy, F1 score, and log-scaled ROC curves. First, we form $D_{\text{dists}}$ by computing the distances between records in $D_{\text{attack}}$ and $S$. We randomly split $D_{\text{dists}}$ into 70\% training (KDE fitting) and 30\% testing (with balanced members and non-members), using $0.5$ membership probability threshold for classification and Scott's rule for KDE bandwidths. The specific choice of bandwidth matters in practice, however, the analysis and results hold for any choice. For true distribution attack, we use an equal number of members and non-members in the training subset. Table \ref{table:training_test_distances} shows training and test set sizes across the four datasets.
\begin{table}[!ht]
    \centering
    \caption{Number of training and test distances for each dataset.}
    \begin{adjustbox}{width=\linewidth}
    \begin{tabular}{ccccc}
    \toprule
        ~ & MIMIC-IV & UK Census & Texas-100X & Nexoid \\ \midrule
        \#Training distances & 112,000 & 398,818 & 647,588 & 420,000 \\ 
        \#Test distances & 48,000 & 170,923 & 277,540 & 180,000 \\ \bottomrule
    \end{tabular}
    \end{adjustbox}
    \label{table:training_test_distances}
\end{table}

Table~\ref{table:UB_RawDist_Combined} reports the accuracies and F1 scores for the true distribution attack across the various datasets and generative models. In majority of the cases, the synthetic datasets generated using Bayesian Network are observed to be comparatively more vulnerable to membership inference attacks, consistently exhibiting higher accuracy and F1 scores. Figure \ref{fig:MIA_MCB} ranks the synthetic data generators by their susceptibility to MIAs using the Multiple Comparisons with the Best (MCB) test \cite{koning2005m3}, where lower MIA accuracy denotes greater structural robustness. DPGAN achieves the best rank, successfully leveraging its differential privacy guarantees to withstand MIAs better. The Bayesian Network, lacking such protections, proves the most vulnerable and ranks last. Following \cite{carlini2022MIA_ML_3}, we also analyze ROC curves to assess worst-case privacy leakage, by evaluating TPR at low FPR regimes (see Section \ref{discussion:ROC}).

\begin{table}[!ht]
    \centering
    \caption{Membership inference accuracies and F1 scores for the true distribution attack on various datasets and generators.}
    \begin{adjustbox}{width=\linewidth}
    \begin{tabular}{llcccccc}
    \toprule
        Metric & Dataset & CTGAN & ADS-GAN & DPGAN & TabDDPM & TVAE & Bayesian Network \\ \midrule
        Accuracy & MIMIC-IV & 75.617\% & 73.708\% & 59.433\% & 79.508\% & \textbf{88.952\%} & 68.298\% \\ 
        ~ & UK Census & 50.075\% & 49.918\% & 57.933\% & 50.055\% & 49.975\% & \textbf{62.889\%} \\ 
        ~ & Texas-100X & 83.957\% & 71.184\% & 71.023\% & 50.49\% & 78.341\% & \textbf{97.503\%} \\ 
        ~ & Nexoid & 50.827\% & 50.353\% & 50.057\% & 51.6\% & 53.768\% & \textbf{58.493\%} \\ \midrule
        F1 Score & MIMIC-IV & 0.793 & 0.791 & 0.612 & 0.827 & \textbf{0.877} & 0.759 \\ 
        ~ & UK Census & 0.333 & 0.402 & 0.274 & \textbf{0.633} & 0.375 & 0.631 \\ 
        ~ & Texas-100X & 0.825 & 0.760 & 0.716 & 0.454 & 0.781 & \textbf{0.975} \\ 
        ~ & Nexoid & 0.516 & 0.079 & 0.042 & 0.579 & 0.540 & \textbf{0.619} \\ \bottomrule
    \end{tabular}
    \end{adjustbox}
    \label{table:UB_RawDist_Combined}
\end{table}

\begin{figure}[h!] 
    \centering
    \includegraphics[width=0.75\linewidth]{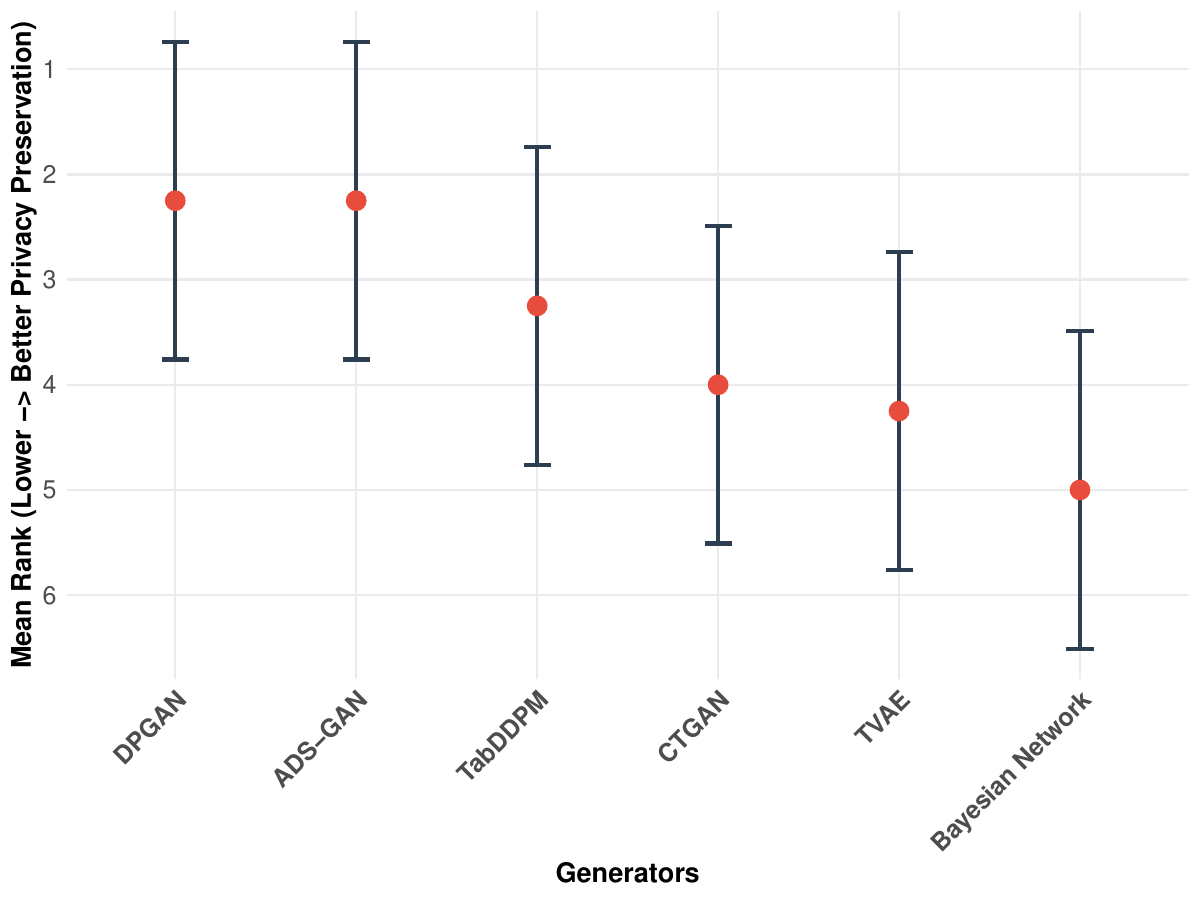}
    \caption{MCB test rankings evaluating generator robustness against MIAs. Lower mean ranks correspond to lower MIA accuracy, denoting superior privacy preservation.}
    \label{fig:MIA_MCB}
    \end{figure}

For the realistic attack, we conducted our experiments using nine distinct distance thresholds, corresponding to the $10^{th}, \ 20^{th}, \ldots, 90^{th}$ percentiles of the nearest neighbor distances. For each threshold, the distances are partitioned as `supposed member' and `supposed non-member' distances, and separate KDEs are fitted to model their distributions. We report the resulting F1 scores achieved by the realistic attack for each threshold, and compare these with the corresponding scores obtained using Method 1 through heatmaps. 
Experimental details such as nearest-neighbor distance computations and KDE fitting are deferred to supplementary Section \ref{appendix:experiments}. The following subsections present the results for both attack variants on the four datasets.


\subsection{True Distribution Attack}
Results for the true distribution attack are presented in Table \ref{table:UB_RawDist_Combined}, which serves as a white-box privacy validator for the data custodian. When member and non-member distance distributions are statistically indistinguishable, the true distribution attack yields poor performance below baseline levels. For example, TVAE-generated UK Census data shows statistically indistinguishable distances (KS test $p$-value = 0.71), resulting in accuracy (49.97\%) and F1 score (0.375) below baselines. While this suggests low average MIA risk, which might be a good news for the data custodian, ROC analysis (Figure \ref{fig:UKCensus_UB_RawDist_ROC}) reveals high TPRs at low FPRs, indicating successful attacks in worst-case scenarios (discussed in Section \ref{discussion:ROC}).

For the MIMIC-IV synthetic datasets, KS tests ($p$-values = 0) confirm statistically distinguishable member and non-member distance distributions. TVAE-generated MIMIC-IV synthetic data shows the highest vulnerability with $88.952\%$ accuracy and $0.877$ F1 score. Figure \ref{fig:EHR_UB_KDE} shows member/non-member distance distributions, with fitted KDEs used to compute test record membership probabilities. 


\begin{figure}[h!]
    \centering
    \includegraphics[width=\linewidth]{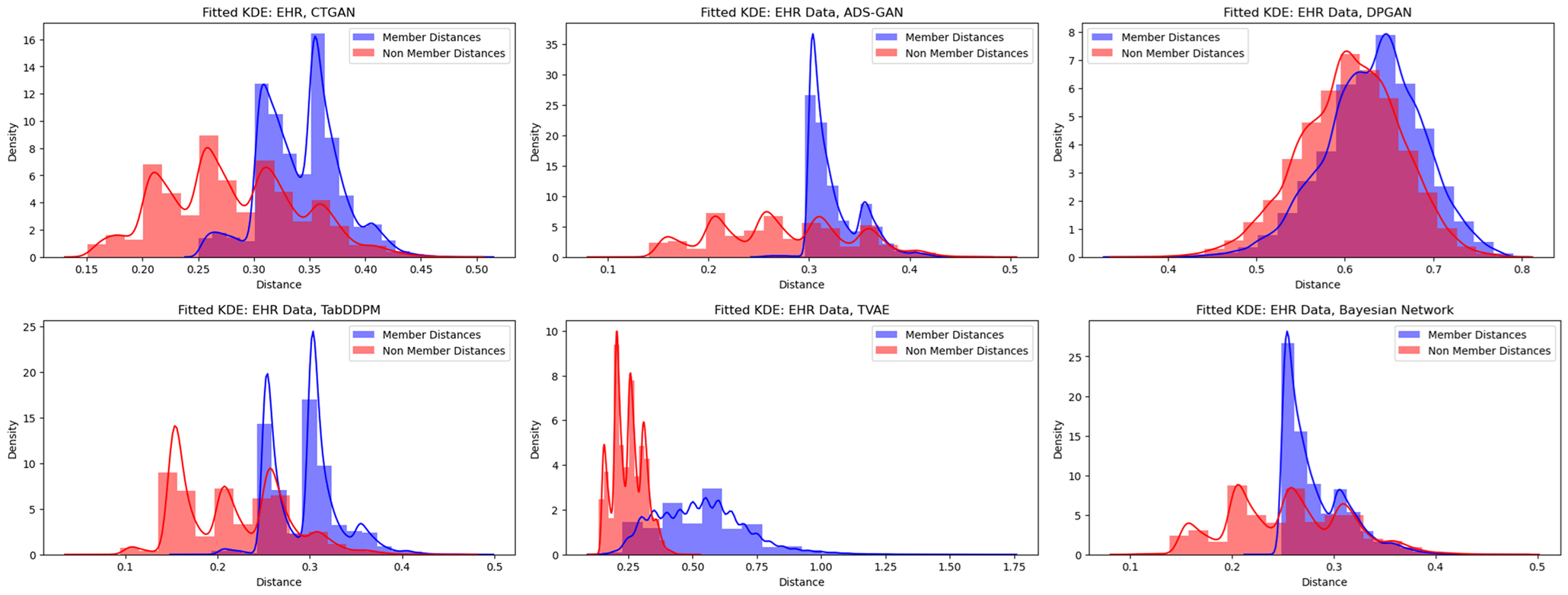}
    \caption{MIMIC-IV data: distribution of nearest-neighbor distances between the training records (in $D_{\text{attack}}$) and various synthetic datasets.}
    \label{fig:EHR_UB_KDE}
    \end{figure}
    


The UK Census dataset contains only categorical features, resulting in 7-14 discrete distance values. KS tests show indistinguishable member and non-member distances for CTGAN-, ADS-GAN-, TabDDPM-, and TVAE-generated synthetic datasets, with true distribution attack accuracies near the 50\% baseline, indicating reduced MIA susceptibility from a data holder's perspective. Figure \ref{fig:UKCensus_UB_RawDist_ROC} presents log-scaled ROC curves for true distribution attacks on UK Census synthetic datasets, demonstrating our method's capability for comprehensive ROC analysis through probabilistic predictions.

\begin{figure}[h!]
    \centering
    \includegraphics[width=0.8\linewidth]{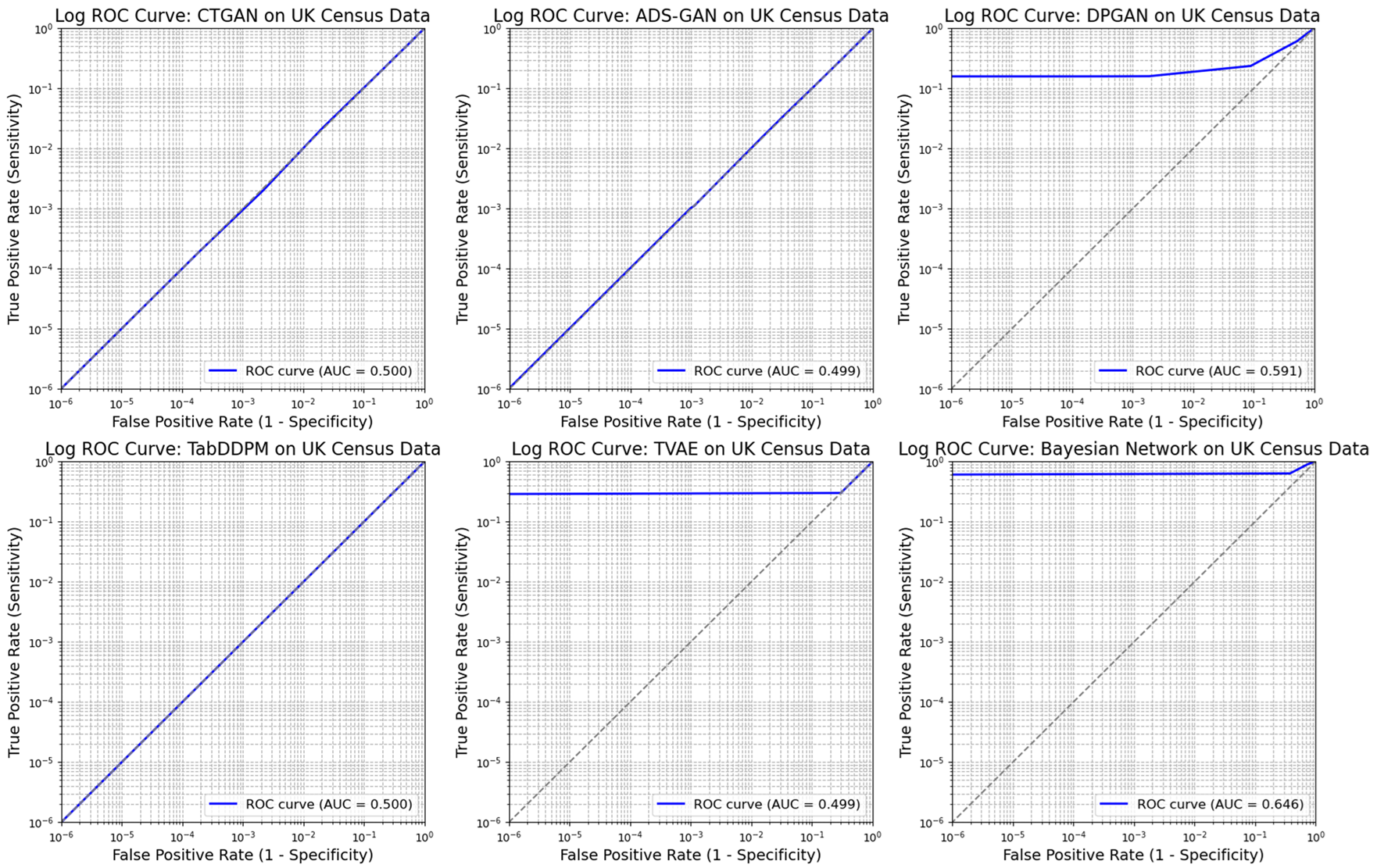}
    \caption{UK Census data: Log-ROC curves for the true distribution attack.}
    \label{fig:UKCensus_UB_RawDist_ROC}
    \end{figure}


Figure \ref{fig:Texas_UB_DistvsProb} illustrates the distance versus membership probability for the Texas-100X dataset. Clearly, the relationship shows a monotonically decreasing trend for all synthetic datasets, indicating that the probability of a record being a member of the real data decreases if it is at a greater distance from the synthetic data. For brevity, the ROC curves for the true distribution attack on Texas-100X and Nexoid datasets are deferred to the supplementary material (Figures \ref{fig:Texas_UB_RawDist_ROC} and \ref{fig:Nexoid_UB_RawDist_ROC}, respectively).

\begin{figure}[t!]
    \centering
    \includegraphics[width=0.8\linewidth]{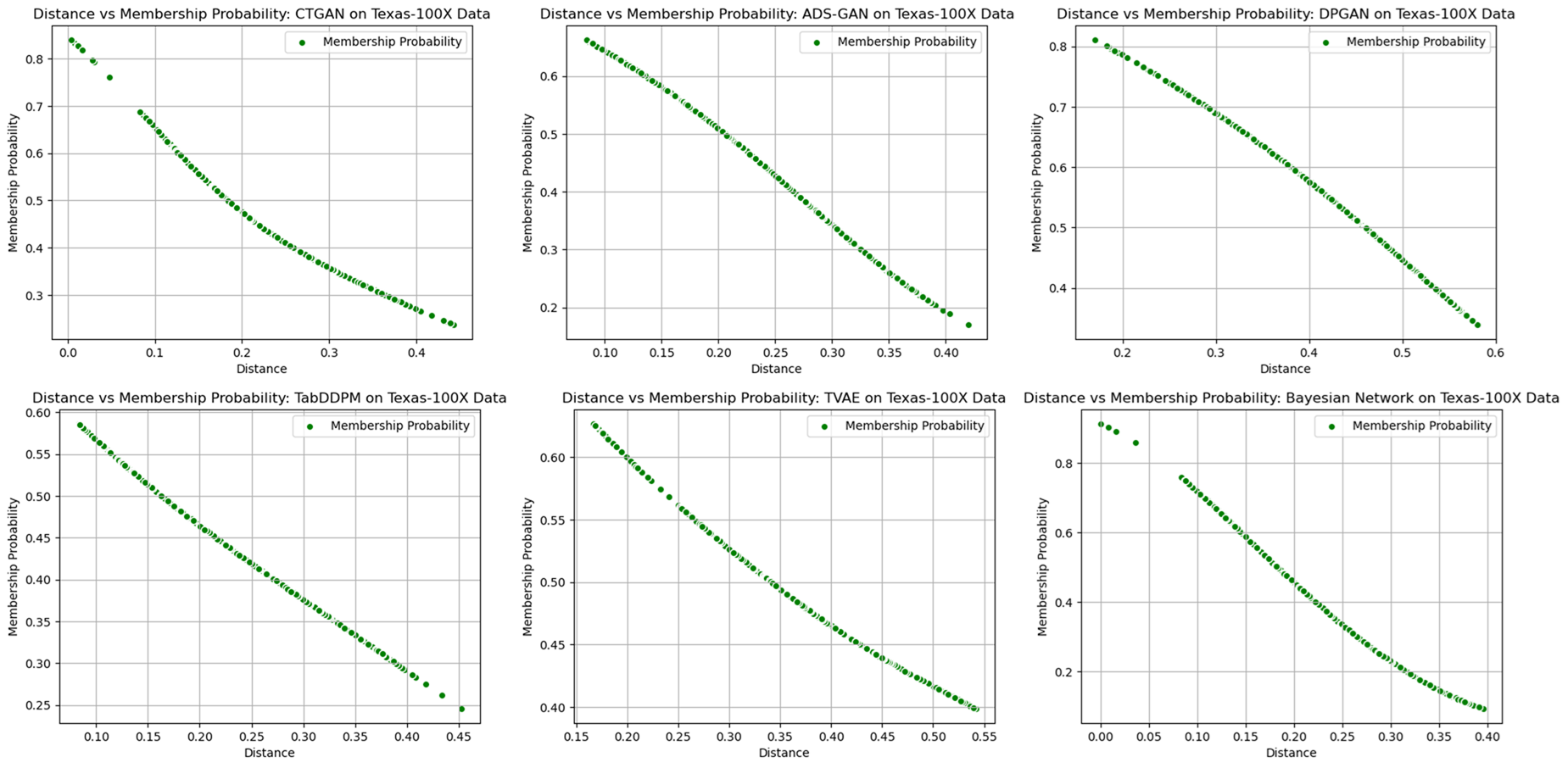}
    \caption{Texas-100X data: distance vs. membership probability for the test records.}
    \label{fig:Texas_UB_DistvsProb}
    \end{figure}

\subsection{Realistic Attack}

Figure \ref{fig:Realistic_F1_Scores} compares the F1 scores (MIA risk) across 9 different distance thresholds, obtained by the realistic attack and Method 1 for MIMIC-IV and UK Census datasets. The realistic attack yields low F1 scores at lower thresholds for MIMIC-IV, indicating limited separability between member and non-member records in these regions. However, at higher thresholds, the F1 scores increase and approach the baseline value of 0.67. For both MIMIC-IV and UK Census datasets, the proposed method consistently outperforms the baseline Method 1 \cite{el2022MIA_synth} in terms of attack strength under the same conditions, especially at higher distance thresholds.

\begin{figure}[t!]
    \centering
    \includegraphics[width=\linewidth]{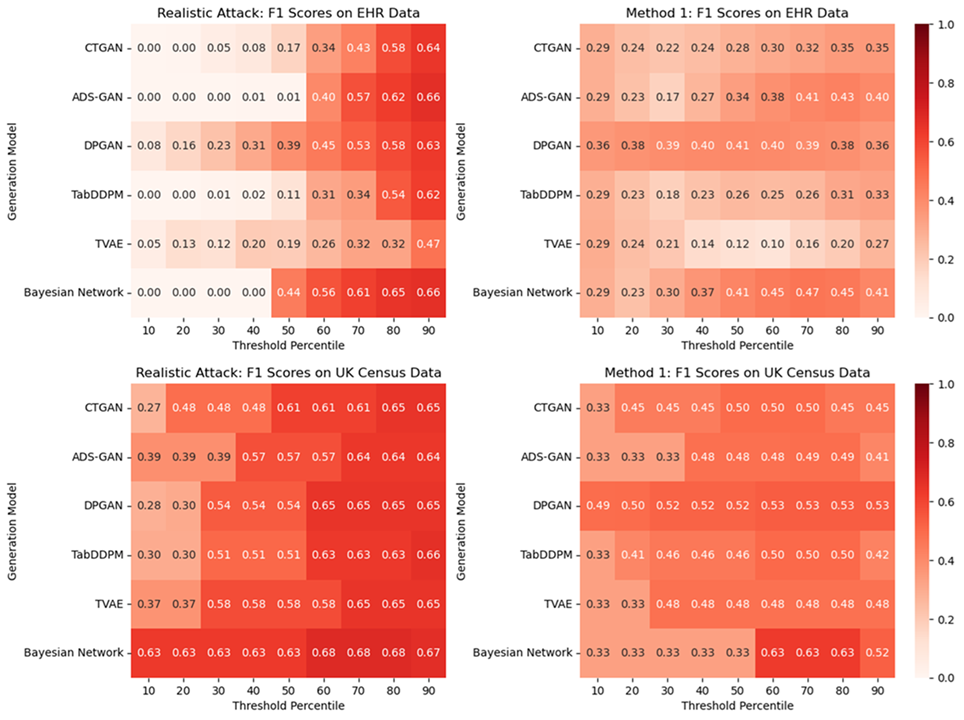}
    \caption{Comparison of F1 scores for the proposed realistic attack against Method 1 across various distance thresholds on MIMIC-IV (top) and UK Census (bottom) datasets.}
    \label{fig:Realistic_F1_Scores}
    \end{figure}

\section{Discussion} \label{sec:discussion}
\subsection{Computational Cost Analysis} \label{sec:computational_cost}

To highlight the computational efficiency of our proposed method as compared to expensive shadow model-based membership inference attacks \cite{guepin2023synthetic}, we note the execution times of the latter across our datasets in Table~\ref{tab:shadow_runtime_all}. These durations encompass the complete shadow modeling pipeline -- repeated synthetic data generation and subsequent attack model training. Because the official implementation\footnote{https://github.com/imperial-aisp/MIA-synthetic} lacks native GPU support, evaluations were performed on a $20$-core Intel$^{\text{\textregistered}}$ Xeon$^{\text{\textregistered}}$ 4114 CPU setup. The results demonstrate a severe computational bottleneck even for classical generators (e.g., PrivBayes, Synthpop), exacerbated by the fact that these times reflect the classification of merely 10 test records. This inefficiency inherently stems from iteratively training generative models across multiple shadow splits, a process that scales poorly with dataset size.

\begin{table}[]
\centering
\caption{Execution time\tablefootnote{`h' denotes `hours' and `m' denotes `minutes.'} required for shadow model-based MIAs across different synthetic data generators and datasets.}
\label{tab:shadow_runtime_all}
\begin{adjustbox}{width=\linewidth}
\begin{tabular}{lcccc}
\toprule
Generator & MIMIC-IV & UK Census & Texas-100X & Nexoid \\
\midrule
PrivBayes & 9h 32m & 8h 13m & 8h 25m & 8h 53m \\
Synthpop & 7h 17m & 6h 20m & 6h 31m & 6h 51m \\
Bayesian Network & 5h 6m & 4h 42m & 4h 50m & 5h 6m \\
CTGAN & 20h 48m & 18h 10m & 18h 38m & 19h 35m \\
\bottomrule
\end{tabular}
\end{adjustbox}
\end{table}

Conversely, our KDE-based framework entirely circumvents shadow model training, relying instead on a single nearest-neighbor distance computation, followed by kernel density estimation. While high-dimensional pairwise distance calculations introduce a baseline overhead, we mitigate this bottleneck through GPU-parallelized execution and chunk-wise array processing. Table~\ref{table:kde_cpu_gpu_times} reports the execution times from KDE fitting to membership evaluation on CTGAN-generated datasets across multiple CPU and GPU configurations. Notably, evaluating thousands of test records (detailed in Table~\ref{table:training_test_distances}) requires trivial computation time compared to shadow modeling. The initial nearest-neighbor distance matrix generation incurs a 30 to 90-minute overhead, depending on the dataset size; however, this is a strictly one-time, cacheable expense that eliminates redundant computations in all subsequent evaluations (further details in supplementary Section~\ref{appendix:experiments}).

\begin{table}[!ht]
    \centering
    \caption{Time taken for implementing the KDE-based workflow on different GPU and CPU configurations, to compute the true distribution attack risk on CTGAN-generated synthetic datasets. The least times taken for each dataset are highlighted.}
    \begin{adjustbox}{width=\linewidth}
    \begin{tabular}{ccccc}
    \toprule
        Device & MIMIC-IV & UK Census & Texas-100X & Nexoid \\ \midrule
        GPU Tesla P100 $\times 1$ & \textbf{1.74s} & \textbf{17.2s} & \textbf{48.8s} & \textbf{18.9s} \\ 
        GPU RTX 2080Ti $\times 1$ & 3.91s & 43.5s & 1m 57s & 49.2s \\ 
        CPU Intel$^{\text{\textregistered}}$  Xeon$^{\text{\textregistered}}$ $\times 96$ & 7.4s & 1m 19s &	3m 26s & 1m 34s \\ 
        GPU Tesla T4 $\times 1$ & 7.5s & 1m 32s & 4m 10s & 1m 44s \\ 
        CPU Intel$^{\text{\textregistered}}$ Xeon$^{\text{\textregistered}}$ 4114 $\times 20$ & 26.9s & 5m 6s & 14m 56s & 6m 39s \\ \bottomrule
    \end{tabular}
    \end{adjustbox}
    \label{table:kde_cpu_gpu_times}
\end{table}

Under certain realistic assumptions, the proposed framework achieves an average speedup of approximately $\mathbf{200,000\times}$ over the state-of-the-art shadow modeling technique on identical CPU hardware (see Figure~\ref{fig:speedup} and supplementary Section~\ref{appendix:computational}). For data custodians requiring repeated MIA risk assessments across multiple datasets or generative models, the computational exhaustiveness of shadow models is often prohibitive. By contrast, our KDE-based approach provides a highly scalable, computationally lightweight alternative for post-generation risk assessment while preserving rigorous probabilistic risk characterization.

\begin{figure}[h!]
   \centering
   \includegraphics[width=0.8\linewidth]{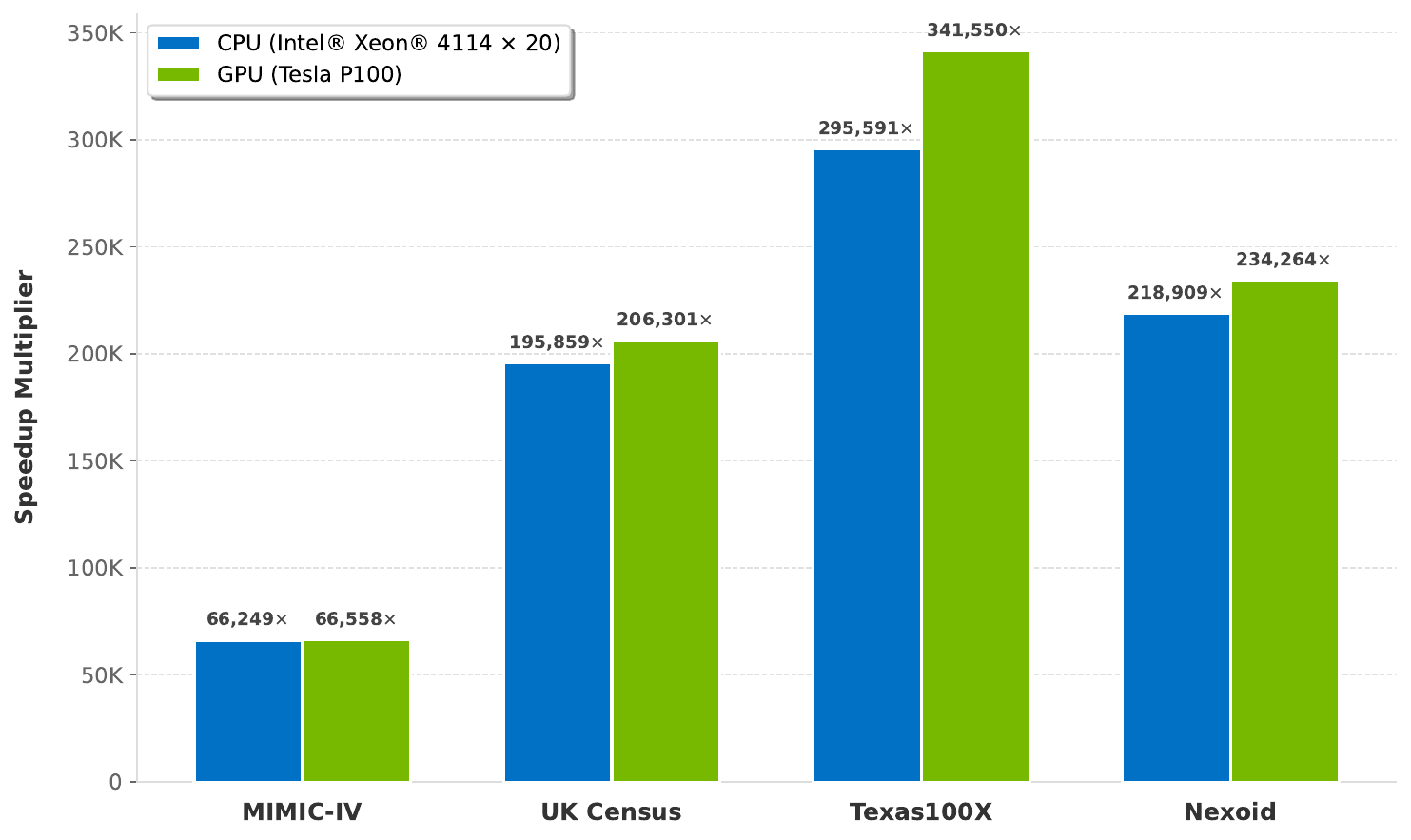}
   \caption{Speedup offered by the proposed method against shadow model-based MIA using CTGAN generator on CPU and GPU devices.}
    \label{fig:speedup}
    \end{figure}

\subsection{ROC Analysis} \label{discussion:ROC}
Our approach enables seamless ROC analysis, revealing critical limitations of average-case metrics like accuracy and AUC in evaluating membership inference attacks. For example, TVAE-generated UK Census data achieves only 49.97\% accuracy (below baseline), while DPGAN achieves 57.93\% (8 percentage points above baseline), suggesting limited risk. However, log-ROC analysis (Figure \ref{fig:UKCensus_UB_RawDist_ROC}) reveals TPRs of 0.1-1.0 at FPR=$10^{-6}$, indicating TPR up to $10^5$ times greater than FPR -- a substantial vulnerability masked by benign average metrics. Similarly, TVAE-generated Nexoid data shows marginally above-baseline performance (AUC=0.554, accuracy=53.77\%), yet achieves TPR=$10^{-4}$-$10^{-3}$ at FPR=$10^{-6}$ (supplementary Figure \ref{fig:Nexoid_UB_RawDist_ROC}), representing TPR over 100 times higher than FPR. From a privacy perspective, TPRs at low FPRs should not exceed $t=20$ times the FPR (adjustable based on use-case requirements), and metric selection should align with application-specific risk tolerance and privacy requirements.

This highlights the main limitation of the distance-based baseline method \cite{el2022MIA_synth}, which has been implemented in a lot of works in practice for evaluating MIA risk \cite{mendelevitch2021fidelity, choi2017generating, zhang2020ensuring, goncalves2020generation, chen2020gan, yan2021generating}. ROC analysis is critical to identify high-TPR at low-FPR regimes in order to quantify worst-case privacy leakage risks \cite{carlini2022MIA_ML_3}. Hence, our method offers a middle ground between highly expensive shadow model-based attacks and comparatively less powerful distance-based attacks, aiming to serve as an important tool for data custodians and practitioners.

\section{Conclusion}
We proposed a computationally efficient KDE-based framework for calculating membership disclosure risk in tabular synthetic datasets, by modeling nearest-neighbor distance distributions. Our method enables probabilistic membership inference without expensive shadow modeling, with further acceleration through distance calculations and KDE estimation on GPUs. The proposed method primarily functions as a \textbf{post-generation metric}, allowing data custodians to assess risk after synthesis using only the training data, synthetic dataset, and reference data from the same population (a subset of the real data not used in training). Through extensive experiments with four real-world datasets and six state-of-the-art synthetic data generators, we demonstrate that our method is (1) extremely computationally lightweight as compared to shadow model-based MIAs, and (2) stronger in terms of attack strength than baseline distance-based MIAs, thereby offering data holders a practical tool to assess membership inference risk.

\subsection*{Limitations and Future Directions}
The experiments conducted in this study use balanced attack datasets with equal number of member/non-member records, to evaluate attack performance improvement (or risk) with a well-established baseline risk (50\% in terms of attack accuracy). However, \cite{el2022MIA_synth} suggest setting the member proportions to $n/N$ (training size/population size) for realistic deployment scenarios. To this end, our implementation supports flexible partitioning and train-test-split parameters, offering practitioners with the flexibility of choosing the proportion of members and non-members for testing based on their requirements. Future work could explore: (1) relaxing balanced dataset assumptions to improve practical applicability, (2) investigating theoretical guarantees on distance-to-membership probability mappings, (3) extending the KDE-based methodology from tabular to general data (e.g., images), and (4) developing hybrid strategies combining KDE-based post-hoc assessment with lightweight shadow modeling or adversarial training for more robust membership disclosure risk evaluation.

\bibliographystyle{IEEEtran}
\bibliography{sample-base}

@String{Computer = "{IEEE} Computer" }

@String{Springer = "Springer-Verlag" }

@article{Gower1971,
    author = "J.C. Gower",
    title = "A General Coefficient of Similarity and Some of Its Properties",
    journal = "Biometrics",
    year = "1971",
    month = "April",
    volume = "27",
    number = "4",
    pages = "857--871"
}

@article{xu2019modeling,
  title={Modeling tabular data using conditional gan},
  author={Xu, Lei and Skoularidou, Maria and Cuesta-Infante, Alfredo and Veeramachaneni, Kalyan},
  journal={Advances in neural information processing systems},
  volume={32},
  year={2019}
}

@article{zhang2017privbayes,
  title={Privbayes: Private data release via bayesian networks},
  author={Zhang, Jun and Cormode, Graham and Procopiuc, Cecilia M and Srivastava, Divesh and Xiao, Xiaokui},
  journal={ACM Transactions on Database Systems (TODS)},
  volume={42},
  number={4},
  pages={1--41},
  year={2017},
  publisher={ACM New York, NY, USA}
}

@inproceedings{vardhan2020generating,
  title={Generating privacy-preserving synthetic tabular data using oblivious variational autoencoders},
  author={Vardhan, L Vivek Harsha and Kok, Stanley},
  booktitle={Proceedings of the Workshop on Economics of Privacy and Data Labor at the 37 th International Conference on Machine Learning},
  year={2020}
}

@article{ma2020vaem,
  title={Vaem: a deep generative model for heterogeneous mixed type data},
  author={Ma, Chao and Tschiatschek, Sebastian and Turner, Richard and Hern{\'a}ndez-Lobato, Jos{\'e} Miguel and Zhang, Cheng},
  journal={Advances in Neural Information Processing Systems},
  volume={33},
  pages={11237--11247},
  year={2020}
}

@inproceedings{choi2017generating,
  title={Generating multi-label discrete patient records using generative adversarial networks},
  author={Choi, Edward and Biswal, Siddharth and Malin, Bradley and Duke, Jon and Stewart, Walter F and Sun, Jimeng},
  booktitle={Machine learning for healthcare conference},
  pages={286--305},
  year={2017},
  organization={PMLR}
}

@inproceedings{shokri2017MIA_ML_2,
  title={Membership inference attacks against machine learning models},
  author={Shokri, Reza and Stronati, Marco and Song, Congzheng and Shmatikov, Vitaly},
  booktitle={2017 IEEE symposium on security and privacy (SP)},
  pages={3--18},
  year={2017},
  organization={IEEE}
}

@inproceedings{carlini2022MIA_ML_3,
  title={Membership inference attacks from first principles},
  author={Carlini, Nicholas and Chien, Steve and Nasr, Milad and Song, Shuang and Terzis, Andreas and Tramer, Florian},
  booktitle={2022 IEEE Symposium on Security and Privacy (SP)},
  pages={1897--1914},
  year={2022},
  organization={IEEE}
}

@article{el2022MIA_synth,
  title={Validating a membership disclosure metric for synthetic health data},
  author={El Emam, Khaled and Mosquera, Lucy and Fang, Xi},
  journal={JAMIA open},
  volume={5},
  number={4},
  pages={ooac083},
  year={2022},
  publisher={Oxford University Press}
}

@inproceedings{yale2019assessing,
  title={Assessing privacy and quality of synthetic health data},
  author={Yale, Andrew and Dash, Saloni and Dutta, Ritik and Guyon, Isabelle and Pavao, Adrien and Bennett, Kristin P},
  booktitle={Proceedings of the Conference on Artificial Intelligence for Data Discovery and Reuse},
  pages={1--4},
  year={2019}
}

@article{parzen1962estimation,
  title={On estimation of a probability density function and mode},
  author={Parzen, Emanuel},
  journal={The Annals of Mathematical Statistics},
  volume={33},
  number={3},
  pages={1065--1076},
  year={1962},
  publisher={JSTOR}
}

@article{giomi2023unified,
  title={{A Unified Framework for Quantifying Privacy Risk in Synthetic Data}},
  author={Giomi, Matteo and Boenisch, Franziska and Wehmeyer, Christoph and Tasn{\'a}di, Borb{\'a}la},
  journal={Proceedings on Privacy Enhancing Technologies},
  year={2023}
}

@article{lautrup2025syntheval,
  title={Syntheval: a framework for detailed utility and privacy evaluation of tabular synthetic data},
  author={Lautrup, Anton D and Hyrup, Tobias and Zimek, Arthur and Schneider-Kamp, Peter},
  journal={Data Mining and Knowledge Discovery},
  volume={39},
  number={1},
  pages={6},
  year={2025},
  publisher={Springer}
}

@inproceedings{pamisetty2024ehrgpt,
  title={{EHRGPT: Leveraging language models for generating synthetic health records}},
  author={Pamisetty, Giridhar and Batreddy, Subbareddy and Verma, Priya and Babu, Ch Sobhan},
  booktitle={2024 IEEE International Conference on Big Data (BigData)},
  pages={4579--4587},
  year={2024},
  organization={IEEE}
}

@inproceedings{van2023membership,
  title={{Membership Inference Attacks against Synthetic Data through Overfitting Detection}},
  author={van Breugel, Boris and Sun, Hao and Qian, Zhaozhi and van der Schaar, Mihaela},
  booktitle={International Conference on Artificial Intelligence and Statistics},
  pages={3493--3514},
  year={2023},
  organization={PMLR}
}

@article{lassila2026practical,
  title={Practical bayes-optimal membership inference attacks},
  author={Lassila, Marcus and {\"O}stman, Johan and Ngo, Khac-Hoang and Graell i Amat, Alexandre},
  journal={Advances in Neural Information Processing Systems},
  volume={38},
  pages={34278--34312},
  year={2026}
}

@article{li2026vid,
  title={{Vid-SME: Membership Inference Attacks against Large Video Understanding Models}},
  author={Li, Qi and Yu, Runpeng and Wang, Xinchao},
  journal={Advances in Neural Information Processing Systems},
  volume={38},
  pages={111572--111596},
  year={2026}
}

@inproceedings{rossi2025membership,
  title={Membership inference attacks on sequence models},
  author={Rossi, Lorenzo and Aerni, Michael and Zhang, Jie and Tram{\`e}r, Florian},
  booktitle={2025 IEEE Security and Privacy Workshops (SPW)},
  pages={98--110},
  year={2025},
  organization={IEEE}
}

@article{wu2024rethinking,
  title={{Rethinking Membership Inference Attacks Against Transfer Learning}},
  author={Wu, Cong and Chen, Jing and Fang, Qianru and He, Kun and Zhao, Ziming and Ren, Hao and Xu, Guowen and Liu, Yang and Xiang, Yang},
  journal={IEEE Transactions on Information Forensics and Security},
  volume={19},
  pages={6441--6454},
  year={2024},
  publisher={IEEE}
}

@article{he2025membership,
  title={Membership inference attacks on recommender system: A survey},
  author={He, Jiajie and Chen, Xintong and Fang, Xinyang and Chen, Min-Chun and Gu, Yuechun and Chen, Keke},
  journal={arXiv preprint arXiv:2509.11080},
  year={2025}
}

@inproceedings{he2025towards,
  title={{Towards Label-Only Membership Inference Attack against Pre-trained Large Language Models}},
  author={He, Yu and Li, Boheng and Liu, Liu and Ba, Zhongjie and Dong, Wei and Li, Yiming and Qin, Zhan and Ren, Kui and Chen, Chun},
  booktitle={34th USENIX Security Symposium (USENIX Security 25)},
  pages={1609--1628},
  year={2025}
}

@inproceedings{hu2025membership,
  title={{Membership Inference Attacks Against Vision-Language Models}},
  author={Hu, Yuke and Li, Zheng and Liu, Zhihao and Zhang, Yang and Qin, Zhan and Ren, Kui and Chen, Chun},
  booktitle={34th USENIX Security Symposium (USENIX Security 25)},
  pages={1589--1608},
  year={2025}
}

@inproceedings{guepin2023synthetic,
  title={Synthetic is all you need: removing the auxiliary data assumption for membership inference attacks against synthetic data},
  author={Gu{\'e}pin, Florent and Meeus, Matthieu and Cre{\c{t}}u, Ana-Maria and de Montjoye, Yves-Alexandre},
  booktitle={European Symposium on Research in Computer Security},
  pages={182--198},
  year={2023},
  organization={Springer}
}

@misc{synthcity,
  doi = {10.48550/ARXIV.2301.07573},
  url = {https://arxiv.org/abs/2301.07573},
  author = {Qian, Zhaozhi and Cebere, Bogdan-Constantin and van der Schaar, Mihaela},
  title = {Synthcity: facilitating innovative use cases of synthetic data in different data modalities},
  year = {2023},
  copyright = {Creative Commons Attribution 4.0 International}
}

@article{yoon2020anonymization,
  title={Anonymization through data synthesis using generative adversarial networks (ADS-GAN)},
  author={Yoon, Jinsung and Drumright, Lydia N and Van Der Schaar, Mihaela},
  journal={IEEE journal of biomedical and health informatics},
  volume={24},
  number={8},
  pages={2378--2388},
  year={2020},
  publisher={IEEE}
}

@article{xie2018differentially,
  title={Differentially private generative adversarial network},
  author={Xie, Liyang and Lin, Kaixiang and Wang, Shu and Wang, Fei and Zhou, Jiayu},
  journal={arXiv preprint arXiv:1802.06739},
  year={2018}
}

@article{young2009using,
  title={Using Bayesian networks to create synthetic data},
  author={Young, Jim and Graham, Patrick and Penny, Richard},
  journal={Journal of Official Statistics},
  volume={25},
  number={4},
  pages={549--567},
  year={2009},
  publisher={Statistics Sweden (SCB)}
}

@article{zhang2020ensuring,
  title={Ensuring electronic medical record simulation through better training, modeling, and evaluation},
  author={Zhang, Ziqi and Yan, Chao and Mesa, Diego A and Sun, Jimeng and Malin, Bradley A},
  journal={Journal of the American Medical Informatics Association},
  volume={27},
  number={1},
  pages={99--108},
  year={2020},
  publisher={Oxford University Press}
}

@article{goncalves2020generation,
  title={Generation and evaluation of synthetic patient data},
  author={Goncalves, Andre and Ray, Priyadip and Soper, Braden and Stevens, Jennifer and Coyle, Linda and Sales, Ana Paula},
  journal={BMC medical research methodology},
  volume={20},
  pages={1--40},
  year={2020},
  publisher={Springer}
}

@article{koning2005m3,
  title={{The M3 competition: Statistical tests of the results}},
  author={Koning, Alex J and Franses, Philip Hans and Hibon, Michele and Stekler, Herman O},
  journal={{International Journal of Forecasting}},
  volume={21},
  number={3},
  pages={397--409},
  year={2005},
  publisher={Elsevier}
}

@inproceedings{chen2020gan,
  title={{GAN-Leaks: A Taxonomy of Membership Inference Attacks against Generative Models}},
  author={Chen, Dingfan and Yu, Ning and Zhang, Yang and Fritz, Mario},
  booktitle={Proceedings of the 2020 ACM SIGSAC conference on computer and communications security},
  pages={343--362},
  year={2020}
}

@inproceedings{yan2021generating,
  title={Generating electronic health records with multiple data types and constraints},
  author={Yan, Chao and Zhang, Ziqi and Nyemba, Steve and Malin, Bradley A},
  booktitle={AMIA annual symposium proceedings},
  volume={2020},
  pages={1335},
  year={2021}
}

@inproceedings{kotelnikov2023tabddpm,
  title={Tabddpm: Modelling tabular data with diffusion models},
  author={Kotelnikov, Akim and Baranchuk, Dmitry and Rubachev, Ivan and Babenko, Artem},
  booktitle={International Conference on Machine Learning},
  pages={17564--17579},
  year={2023},
  organization={PMLR}
}

@article{johnson2023mimic,
  title={{MIMIC-IV, a freely accessible electronic health record dataset}},
  author={Johnson, Alistair EW and Bulgarelli, Lucas and Shen, Lu and Gayles, Alvin and Shammout, Ayad and Horng, Steven and Pollard, Tom J and Hao, Sicheng and Moody, Benjamin and Gow, Brian and others},
  journal={Scientific data},
  volume={10},
  number={1},
  pages={1},
  year={2023},
  publisher={Nature Publishing Group UK London}
}

@article{johnson2020mimic,
  title={Mimic-iv},
  author={Johnson, Alistair and Bulgarelli, Lucas and Pollard, Tom and Horng, Steven and Celi, Leo Anthony and Mark, Roger},
  journal={PhysioNet. Available online at: https://physionet. org/content/mimiciv/1.0/(accessed August 23, 2021)},
  pages={49--55},
  year={2020}
}

@misc{ONS2011Census,
  author       = {{Office for National Statistics}},
  title        = {Census Microdata Teaching Files (2011)},
  year         = {2011},
  howpublished = {\url{https://www.ons.gov.uk/census/2011census/2011censusdata/
                       censusmicrodata/microdatateachingfile}},
  note         = {Accessed: 2025-06-05}
}

@misc{texas2006inpatient,
  title = {Texas Hospital Inpatient Discharge Public Use Data File, [Quarters 1-4, 2006]},
  author = {{Texas Department of State Health Services}},
  year = {2022},
  note = {[June 21, 2022]},
  howpublished = {\url{https://www.dshs.texas.gov/centers-health-statistics/health-care-data-collection/texas-inpatient-public-use-data-file-pudf}},
  institution = {Texas Department of State Health Services, Austin, Texas}
}

@misc{nexoid2020covid19,
  author       = {{Nexoid Ltd}},
  title        = {Nexoid COVID-19 Dataset},
  year         = {2020},
  howpublished          = {https://www.covid19survivalcalculator.com/download},
  note         = {Accessed: 2025-06-06}
}

@inproceedings{stadler2022synthetic,
  title={Synthetic data--anonymisation groundhog day},
  author={Stadler, Theresa and Oprisanu, Bristena and Troncoso, Carmela},
  booktitle={31st USENIX Security Symposium (USENIX Security 22)},
  pages={1451--1468},
  year={2022}
}

@article{mendelevitch2021fidelity,
  title={Fidelity and privacy of synthetic medical data},
  author={Mendelevitch, Ofer and Lesh, Michael D},
  journal={arXiv preprint arXiv:2101.08658},
  year={2021}
}

@inproceedings{ankan2015pgmpy,
  title={pgmpy: Probabilistic Graphical Models using Python.},
  author={Ankan, Ankur and Panda, Abinash},
  booktitle={SciPy},
  pages={6--11},
  year={2015}
}

@inproceedings{shokri2017membership,
  title={Membership inference attacks against machine learning models},
  author={Shokri, Reza and Stronati, Marco and Song, Congzheng and Shmatikov, Vitaly},
  booktitle={2017 IEEE symposium on security and privacy (SP)},
  pages={3--18},
  year={2017},
  organization={IEEE}
}

@inproceedings{zhang2023generative,
  title={Generative table pre-training empowers models for tabular prediction},
  author={Zhang, Tianping and Wang, Shaowen and Yan, Shuicheng and Jian, Li and Liu, Qian},
  booktitle={Proceedings of the 2023 conference on empirical methods in natural language processing},
  pages={14836--14854},
  year={2023}
}

@inproceedings{zhao2025tabula,
  title={{Tabula: Harnessing language models for tabular data synthesis}},
  author={Zhao, Zilong and Birke, Robert and Chen, Lydia Y},
  booktitle={Pacific-Asia Conference on Knowledge Discovery and Data Mining},
  pages={247--259},
  year={2025},
  organization={Springer}
}

@inproceedings{
borisov2023language,
title={{Language Models are Realistic Tabular Data Generators}},
author={Vadim Borisov and Kathrin Sessler and Tobias Leemann and Martin Pawelczyk and Gjergji Kasneci},
booktitle={The Eleventh International Conference on Learning Representations },
year={2023},
url={https://openreview.net/forum?id=cEygmQNOeI}
}

@article{gulati2023tabmt,
  title={{TabMT: Generating tabular data with masked transformers}},
  author={Gulati, Manbir and Roysdon, Paul},
  journal={Advances in Neural Information Processing Systems},
  volume={36},
  pages={46245--46254},
  year={2023}
}

\appendices
\renewcommand{\thefigure}{S.\arabic{figure}}
\renewcommand{\thetable}{S.\arabic{table}}
\section{Proof for \Cref{proposition:prob_justification}} \label{appendix:proof}
\begin{proof}
    Assume that the prior probabilities of the query record being a member and non-member are equal, i.e., $\mathbf{P}(\text{member}) = \mathbf{P}(\text{non-member}) = 0.5$. From Bayes' Theorem, we have 
    \begin{equation} \label{eq:Bayes}
        \mathbf{P}(\text{member}|d) = \frac{\mathbf{P}(d|\text{member}) \times \mathbf{P}(\text{member})}{ \mathbf{P}(d)}
    \end{equation}
    By the Theorem of Total Probability, it follows that the probability of observing the distance $d$ is given by:
    \begin{align*}
    \mathbf{P}(d) &= \mathbf{P}(d|\text{member}) \mathbf{P}(\text{member}) \\
    &\qquad+ \mathbf{P}(d|\text{non-member}) \mathbf{P}(\text{non-member}) \\
    &= \mathbf{P}(\text{member}) \ [\mathbf{P}(d|\text{member}) + \mathbf{P}(d|\text{non-member})]
    \end{align*}
    Putting this in the denominator of \eqref{eq:Bayes}, we get that $\mathbf{P}(\text{member}|d) =$
    {\small
    \begin{align*}
    &= \frac{\mathbf{P}(d|\text{member}) \times \mathbf{P}(\text{member})}{\mathbf{P}(\text{member}) \ [\mathbf{P}(d|\text{member}) + \mathbf{P}(d|\text{non-member})] }\\    
    &= \frac{\mathbf{P}(d|\text{member})}{\mathbf{P}(d|\text{member}) + \mathbf{P}(d|\text{non-member})}\\
    &\approx \frac{KDE_{\text{member}}(d)}{KDE_{\text{member}}(d) + KDE_{\text{non-member}}(d)}
    \end{align*}}
    Since, $KDE_{\text{member}}(d)$ and $KDE_{\text{non-member}}(d)$ approximate the probabilities $\mathbf{P}(d|\text{member})$ and $\mathbf{P}(d|\text{non-member})$ respectively.
\end{proof}

\section{Datasets} \label{appendix:datasets}
This section elaborates on the four real-world datasets used in our experiments.
\subsection{MIMIC-IV}
The Medical Information Mart for Intensive Care IV (MIMIC-IV) dataset is a structured and de-identified electronic health record (EHR) resource released in July 2024 \cite{johnson2020mimic, johnson2023mimic}. The dataset contains the information of patients treated at the Beth Israel Deaconess Medical Center (BIDMC) in Boston, Massachusetts, USA, between 2008 and 2019. The original dataset used to train the generation models was preprocessed by adopting the method described in \cite{pamisetty2024ehrgpt}. In their original work, this preprocessing resulted in a consolidated master health record table comprising about $430,000$ records and 147 features per record. As the datasets in this study are for illustrative purposes only, we randomly sample $351{,}231$ records and select 20 important features to form the real (training) dataset for computational efficiency. We generate a synthetic dataset of the same size comprising the same features.

\subsection{UK Census}
The UK Census data is the 2011 Census Microdata Teaching File \cite{ONS2011Census}, which is a publicly available dataset published by the Office for National Statistics. It contains a 1\% random sample of the 2011 Census output for England and Wales, totaling $569{,}741$ records. The dataset includes 17 categorical attributes, covering demographic, socio-economic, and housing-related variables. All data are anonymized and represent individual-level information. It is commonly used for teaching, research, and statistical modeling tasks.

\subsection{Texas-100X}
The Texas-100X dataset \cite{texas2006inpatient} comprises $925{,}128$ anonymized inpatient records from 441 hospitals across Texas in 2006. Each record includes demographic details (age, gender, race, ethnicity) and medical information (hospitalization duration, admission type and source, admitting diagnosis, patient status, medical charges, and principal surgical procedure). The primary machine learning task involves predicting one of 100 surgical procedures based on patient data. The dataset is intended for academic research in machine learning and privacy analysis, with strict prohibitions against attempts to re-identify individuals.

\subsection{Nexoid COVID-19}
The Nexoid COVID-19 dataset \cite{nexoid2020covid19} comprises 1,023,426 anonymized records and 60 features. The dataset encompasses a wide range of variables, including demographics, geographic information, health conditions, behaviors (e.g., public transport usage, healthcare employment), and socioeconomic factors. To ensure privacy, Nexoid implemented measures such as age banding, randomized timestamps, and slight perturbations of geolocation data. During our experiments, columns with more than 67\% missing values were excluded from the dataset, and a random sample of $600{,}000$ records was selected to ensure computational tractability. The resulting dataset comprises 42 features, with missing values imputed using the most frequent category for categorical attributes and the mode for numerical attributes.

\section{Synthetic Data Generators} \label{appendix:generators}
\subsection{CTGAN} \label{methods:ctgan}
The Conditional Tabular Generative Adversarial Network (CTGAN) \cite{xu2019modeling} generates high-fidelity synthetic tabular data by introducing domain-specific techniques to handle mixed continuous and categorical variables. Because standard normalization assumes unimodal distributions, often failing on complex real-world data, CTGAN resolves this via \textit{mode-specific normalization}. It fits a Gaussian Mixture Model (GMM) to identify underlying modes for each continuous column, representing each value as a tuple: its normalized scalar within a specific Gaussian mode and a one-hot encoding of that mode. To model categorical variables and mitigate class imbalance, CTGAN employs a conditional generation mechanism driven by \textit{log-frequency sampling}. During training, categories are sampled proportional to the logarithm of their inverse frequency, effectively up-sampling rare classes. A \textit{conditional vector} (a one-hot encoding of the sampled category) is fed to both the generator and discriminator, guiding the model to learn balanced, category-specific distributions. Building on the standard GAN framework, the generator $G$ processes a noise vector $z$ and the conditional vector to output normalized continuous values and categorical logits. The discriminator $D$ evaluates real or synthetic samples alongside this conditional vector. The architecture optimizes a combined loss function -- cross-entropy for categorical outputs and Wasserstein or $L2$ loss for continuous components. To stabilize training and penalize mode collapse, CTGAN incorporates the PacGAN framework, which evaluates multiple samples simultaneously. Finally, the network's synthetic outputs are post-processed to reconstruct the original tabular format. Continuous variables are denormalized using the inverse GMM transformation, while categorical values are recovered via an argmax operation over the generated logits.

\subsection{ADS-GAN} \label{methods:adsgan}
Anonymization through Data Synthesis using Generative Adversarial Networks (ADS-GAN) \cite{yoon2020anonymization} synthesizes confidential data while strictly bounding individual-level identifiability. Built on a modified conditional GAN (cGAN), ADS-GAN dynamically optimizes the set of conditioning variables for each generated sample. This localized, per-sample conditioning enhances overall data fidelity while effectively mitigating re-identification risks. To enforce privacy, the model integrates $\epsilon$-identifiability, a metric penalizing the proportion of synthetic records that are excessively similar to real records based on a weighted Euclidean distance (a standard also implemented in SynthEval \cite{lautrup2025syntheval}). This privacy constraint is directly embedded into the adversarial training objective:
{\scriptsize
$$\min_G \max_D \left[ \mathbb{E}_{x \sim P_X}[D(x)] - \mathbb{E}_{\hat{x} \sim P_{\hat{X}}}[D(\hat{x})] + \lambda \cdot \mathbb{E}_{x, \hat{x}}[-\Vert{}w \cdot (x - \hat{x})\Vert{}] \right],$$}
where $D$ and $G$ denote the multi-layer perceptron (MLP) discriminator and generator, $w$ represents a feature-specific inverse-entropy weight vector reflecting identifiability risk, and $\lambda$ dictates the trade-off between privacy and distributional fidelity. To stabilize optimization and circumvent mode collapse, ADS-GAN adopts the Wasserstein GAN with Gradient Penalty (WGAN-GP) framework. The generator produces synthetic representations from a combination of noise and real inputs, followed by standard post-processing to accurately reconstruct the original dataset's categorical and continuous structures.

\subsection{DPGAN}
Differentially Private Generative Adversarial Network (DPGAN) \cite{xie2018differentially} synthesizes data with formal $(\epsilon, \delta)$-differential privacy (DP) guarantees by integrating calibrated noise into the training dynamics of a Wasserstein GAN (WGAN). To rigorously bound the influence of any single training record, DPGAN clips the discriminator's gradients to control sensitivity before injecting calibrated Gaussian noise during RMSProp optimization. By building upon the WGAN architecture, the framework utilizes the Wasserstein distance to ensure stable convergence and mitigate mode collapse, even under noisy training conditions. The required noise scale ($\sigma_n$) is analytically determined via the moments accountant technique, which tracks the accumulated privacy loss across iterations based on the sampling rate, the number of discriminator updates, and the target privacy budget $\epsilon$. This approach yields significantly tighter bounds than standard composition theorems, enabling a superior trade-off between privacy and data utility. DPGAN leverages the post-processing invariance property of differential privacy, which mathematically guarantees that a DP-compliant discriminator inherently produces a DP-compliant generator. Through this formulation, the model successfully learns complex data distributions while maintaining strict, provable privacy bounds.

\subsection{TabDDPM}
TabDDPM \cite{kotelnikov2023tabddpm} adapts the denoising diffusion probabilistic model (DDPM) framework to synthesize tabular datasets, employing a hybrid diffusion strategy to handle mixed data types. To manage heterogeneous features, numerical variables are first normalized via a Gaussian quantile transformation and subjected to Gaussian diffusion, while categorical variables are one-hot encoded and independently corrupted through multinomial diffusion (uniform categorical noise). These preprocessed representations are concatenated to form the initial input state $x_0$. The reverse (denoising) process is parameterized by a MLP that takes the sum of the embedded input vector, sinusoidal timestep embeddings, and optional class embeddings. At each timestep, the MLP predicts the added Gaussian noise $\epsilon$ for numerical features and outputs class logits for categorical features. The generative process is inherently class-conditional for classification tasks, whereas for regression, the continuous target is simply modeled as an additional numerical feature. TabDDPM optimizes a composite objective function that integrates mean-squared error for the numerical variables with Kullback–Leibler divergence for the $C$ categorical variables:
$$\mathcal{L}_{\text{TabDDPM}} = \mathcal{L}_{\text{Gaussian}} + \frac{1}{C} \sum_{i=1}^{C} \mathcal{L}_{\text{Multinomial}}^{(i)}.$$
Extensive benchmarking against established generative models, including CTGAN, TVAE, and CTABGAN+, demonstrates that TabDDPM consistently achieves superior performance in maintaining feature-wise distributions, preserving complex correlation structures, and maximizing downstream machine learning utility \cite{kotelnikov2023tabddpm}.

\subsection{TVAE}
The Tabular Variational Autoencoder (TVAE) \cite{xu2019modeling} adapts the classical VAE architecture for heterogeneous tabular data by learning a continuous latent representation of the data distribution through probabilistic inference. To accommodate mixed data types, TVAE utilizes mode-specific normalization for continuous variables, representing each as a tuple $(\alpha_{i,j}, \beta_{i,j})$, where $\alpha_{i,j}$ is a normalized Gaussian scalar and $\beta_{i,j}$ is a one-hot mode indicator. Categorical variables are standard one-hot encoded vectors $d_{i,j}$. The architecture consists of an encoder $q_{\phi}(z_j \vert{} r_j)$ and a decoder $p_{\theta}(r_j \vert{} z_j)$. The encoder maps an input record $r_j$ through two hidden layers (with ReLU activations) to parameterize a multivariate Gaussian latent space, applying the reparameterization trick to output $q_{\phi}(z_j \vert{} r_j) = \mathcal{N}(\mu, \text{diag}(\sigma^2))$. Conversely, the decoder reconstructs the original variables by generating a joint distribution from the latent variable $z_j$. It applies a fully connected layer with a tanh activation to output a mean $\bar{\alpha}_{i,j}$ and variance $\delta_i$ for continuous scalars, while outputting softmax distributions for both categorical variables and mode indicators. The resulting joint predictive distribution is:
$$p_{\theta}(r_j \vert{} z_j) = \prod_{i=1}^{N_c} \mathcal{N}(\alpha_{i,j}; \bar{\alpha}_{i,j}, \delta_i) \cdot \prod_{i=1}^{N_c} \text{Cat}(\beta_{i,j}) \cdot \prod_{i=1}^{N_d} \text{Cat}(d_{i,j}).$$
TVAE is trained via backpropagation using the Adam optimizer to minimize the negative evidence lower bound (ELBO), effectively balancing reconstruction fidelity with latent space regularization:
$$\mathcal{L}_{\text{ELBO}} = \mathbb{E}_{q_{\phi}(z_j \vert{} r_j)}[\log p_{\theta}(r_j \vert{} z_j)] - D_{KL}(q_{\phi}(z_j \vert{} r_j) \Vert{} p(z_j)),$$
where $p(z_j)$ serves as the standard normal prior.

\subsection{Bayesian Network}
Bayesian networks \cite{young2009using} model the joint probability distribution of a dataset using a directed acyclic graph (DAG) to encode conditional independencies. Network structures are either domain-specified or learned algorithmically using scoring metrics like the Bayesian Information Criterion (BIC) or Bayes factors. Each node is governed by a Conditional Probability Distribution (CPD), typically a multinomial distribution with a Dirichlet prior for discrete data, allowing the full joint distribution to be factorized along the DAG. Following parameter learning (updating priors with empirical data), synthetic datasets are generated via topological sampling. Variables are drawn sequentially from root to leaf, conditioned on their parents' previously sampled values. This methodology was notably utilized by Young et al. \cite{young2009using} to generate multiple synthetic cohorts for privacy-preserving statistical inference.

To address structural uncertainty, Bayesian Model Averaging (BMA) can be applied by synthesizing data across various network structures sampled from the posterior. However, because the posterior typically concentrates on a few dominant models, BMA's practical benefit is often marginal \cite{young2009using}. Instead, synthetic data quality relies heavily on calibrating the imaginary sample size (prior weight). An excessively large prior can distort inference, whereas a small prior risks underfitting rare subgroups; balancing this parameter is crucial for optimizing the trade-off between data utility and confidentiality.

For implementation, these models are supported by libraries such as \textit{deal} (R) and \textit{pgmpy} (Python) \cite{ankan2015pgmpy}. In this study, we generate Bayesian network data utilizing the Synthcity framework \cite{synthcity}, which leverages \textit{pgmpy} as its computational backend for graph construction and sampling.

\section{Experimental Details} \label{appendix:experiments}
We leveraged two GPUs -- NVIDIA Tesla P100 (16 GB) and NVIDIA GeForce RTX 2080Ti (11.2 GB) to train the models and generate synthetic datasets. We use \texttt{n\_iter} values ranging from $200$ to $1000$ to train the generators (Table \ref{table:n_iter_values}). In GAN-based models, \texttt{n\_iter} denotes the maximum number of training iterations applied to the generator. In TVAE, \texttt{n\_iter} is the maximum number of iterations in the encoder, while for TabDDPM, it is the maximum number of epochs for training. All other model-specific parameters such as number of hidden layers, number of iterations in the discriminator (or decoder), activation functions, and learning rates were set to the default values of Synthcity \cite{synthcity}. The time taken in synthetic data generation across all datasets and generators are presented in \Cref{table:training_times}.

\begin{table}[ht!]
    \centering
    \caption{{\small \texttt{n\_iter} values used for training different generators across different datasets. The values highlighted in bold were determined by early stopping.}}
    \begin{adjustbox}{width=\linewidth}
    \begin{tabular}{ccccc}
    \toprule
        ~ & MIMIC-IV & UK Census & Texas-100X & Nexoid \\ \midrule
        CTGAN & 200  & 500 & 200 & 200 \\ 
        ADS-GAN & 200  & 500 & 200 & 200 \\ 
        DPGAN & 200 & 200 & 200 & 200 \\ 
        TabDDPM & 500 & 1000 & 500 & 500 \\ 
        TVAE & 200 & 500 & 500 & \textbf{450} \\ 
        Bayesian Network & 1000 & 1000 & 1000 & 1000 \\ \bottomrule
    \end{tabular}
    \end{adjustbox}
    \label{table:n_iter_values}
\end{table}

\begin{table}[!ht]
    \centering
    \caption{{\small Generation times incurred by various models across different datasets. The models corresponding to the training times highlighted in bold were trained on a NVIDIA Tesla P100 GPU, while the rest were trained on a RTX 2080 Ti GPU.}}
    \begin{adjustbox}{width=\linewidth}
    \begin{tabular}{ccccc}
    \toprule
        ~ & MIMIC-IV & UK Census & Texas-100X & Nexoid \\ \midrule
        CTGAN & \textbf{2h 51m 5s} & 8h 45m 9s & 5h 23m 29s & 10h 46min 52s \\ 
        ADS-GAN & \textbf{2h 56m 24s} & 10h 26m 41s & 5h 15m 14s & 10h 42min 48s \\ 
        DPGAN & \textbf{3h 27m 31s} & 5h 51m 15s & \textbf{3h 36m 53s} & 11h 25min 22s \\ 
        TabDDPM & \textbf{1h 32m 26s} & 5h 58m & 2h 26m 46s & \textbf{2h 1m 2s} \\ 
        TVAE & \textbf{2h 10m 5s} & 8h 21m 51s & 8h 37m 14s & 14h 58m 16s \\ 
        Bayesian Network & 6m & 7m & 15m 21s & 1h 27m \\ \bottomrule
    \end{tabular}
    \end{adjustbox}
    \label{table:training_times}
\end{table}

\subsection{Nearest-Neighbor Distance Computation} \label{appendix:Gower}
Computing nearest-neighbor distances between the attack and synthetic datasets is a fundamental computational step in quantifying membership inference risk using our method. While previous implementations of distance-based methods frequently employ Hamming \cite{el2022MIA_synth, mendelevitch2021fidelity} or Euclidean metrics, we adopt Gower's distance \cite{Gower1971}, aligning with risk evaluation frameworks like Anonymeter \cite{giomi2023unified}, due to its robustness in handling mixed-type tabular data. Specifically, numerical features are evaluated using the $L_1$ (Manhattan) distance following min-max or range normalization. This bounds the contribution of each numerical dimension to the $[0,1]$ interval, preserving comparability across features with disparate scales. Conversely, categorical attributes are evaluated via exact matching, assigning a distance of $0$ for matches and $1$ otherwise.

For large-scale datasets, brute-force distance computation is prohibitively expensive. For instance, evaluating our Nexoid dataset sample (600,000 total records across 42 features) requires computing pairwise distances between 300,000 attack records (training and unseen) and 300,000 synthetic records, yielding $90$ billion distinct operations. While standard CPU partitioning mitigates this overhead, it remains bottlenecked by core availability. To resolve this, we leverage GPU-accelerated parallel processing. We partition the real and unseen datasets into chunks, distributing them across a multi-GPU architecture using the Dask library and executing vectorized numerical operations via CuPy arrays. Computations were distributed across two NVIDIA Tesla T4 and four NVIDIA RTX 2080Ti GPUs, with the latter yielding superior throughput due to higher core counts and memory capacity. Using this infrastructure, we efficiently extract the nearest-neighbor distance for each record in $D_{\text{attack}}$. These distances, paired with their ground-truth membership labels, populate the $D_{\text{dists}}$ table, which is subsequently partitioned into balanced training and testing subsets containing equal proportions of members and non-members.

\subsection{KDE Fitting}
At the core of our methodology is the independent fitting of Kernel Density Estimators (KDEs) to member and non-member distance distributions. While fitting a one-dimensional KDE is computationally trivial, evaluating density estimates across vast test sets poses a significant bottleneck. Because standard Python libraries (e.g., Scikit-learn, SciPy) lack native GPU acceleration for KDE operations, data custodians typically must resort to partitioning test sets for CPU-based parallelization. To circumvent this limitation and maximize computational efficiency, we utilize a generic, lightweight GPU-enabled KDE class utilizing CuPy. Requiring only minimal dependencies (CuPy and NumPy), this custom implementation natively supports both KDE fitting and large-scale density evaluation, and serves as the computational engine for our primary experiments. The end-to-end risk quantification workflow proceeds as follows: (1) initializing and partitioning distance arrays into training and testing subsets; (2) fitting KDEs to the training distances; (3) evaluating densities for the test points to derive membership probabilities; and (4) applying probability thresholds to output classification metrics (accuracy and F1 score). Execution time for this pipeline scales proportionally with the number of test points. To benchmark practical deployment costs, Table \ref{table:kde_cpu_gpu_times} in the main text reports the execution time for this complete workflow on CTGAN synthetic data across all four real-world datasets, comparing various GPU and CPU configurations. The corresponding dataset sizes for KDE fitting and evaluation are detailed in Table \ref{table:training_test_distances} in the main text.


To accommodate strict VRAM limits, test points were evaluated in tunable batches; a standard batch size of 1,000 was applied, with a minor reduction to 700 for the Texas-100X dataset on the RTX 2080Ti. CPU benchmarks utilized two Intel$^{\text{\textregistered}}$ Xeon$^{\text{\textregistered}}$ architectures (a 20-core system with 126 GB RAM and a 96-core system with 330 GB RAM), with peak memory consumption remaining under 10 GB. The empirical results demonstrate extreme performance gains via GPU acceleration. Notably, on the massive Texas-100X dataset, the Tesla P100 completed the entire workflow in just 48.8 seconds -- an approximate 94.55\% reduction in computation time compared to the 14 minutes and 56 seconds required by the 20-core CPU setup.

\section{Comparison of Computational Cost} \label{appendix:computational}
This section elaborates on the computational cost analysis introduced in Section \ref{sec:computational_cost} of the main text. For our proposed framework, pairwise nearest-neighbor distances across all datasets and generative models were pre-computed via a parallelized mechanism utilizing four NVIDIA RTX 2080Ti GPUs. Depending on dataset dimensionality and size, this initial phase required between 30 and 90 minutes. In contrast, the shadow model-based MIA benchmark \cite{guepin2023synthetic} was evaluated using a restricted subset of only 10 test records for membership classification, with the resulting execution times detailed in Table \ref{tab:shadow_runtime_all} (main text). Because the official implementation of \cite{guepin2023synthetic} lacks native GPU support, these baseline evaluations were confined to a 20-core CPU environment.

Operating under an identical hardware configuration, Table \ref{table:kde_cpu_gpu_times} (main text) reports the end-to-end execution times for our proposed method when evaluating CTGAN-generated synthetic data across vastly larger test cohorts, ranging from 48,000 (MIMIC-IV) to 277,540 (Texas-100X) records. To facilitate a direct performance comparison, we consider the following assumptions:
\begin{itemize}
    \item The computational overhead of the shadow model-based MIA scales linearly with the test set size. For example, the $1248.36$ minutes required to evaluate 10 test records in the MIMIC-IV dataset linearly projects to approximately 4,161 days for the full 48,000-record test set.  
    \item The preliminary distance computation step in our proposed framework imposes a strict maximum overhead of 90 minutes.
\end{itemize}

Grounded in these assumptions and standardizing on CTGAN as the underlying generator, Table \ref{table:speedup} quantifies the magnitude of the speedup multiplier achieved by the proposed method relative to \cite{guepin2023synthetic} across all four datasets, evaluated under both CPU and GPU hardware configurations.

\begin{table}[!ht]
    \centering
    \caption{{\small Speedup multipler of the proposed method vs. shadow-model based MIA, taking CTGAN as the synthetic data generator.}}
    \begin{adjustbox}{width=\linewidth}
    \begin{tabular}{ccccc}
    \toprule
        ~ & MIMIC-IV & UK Census & Texas100X & Nexoid \\ \midrule
        CPU Intel$^{\text{\textregistered}}$ Xeon$^{\text{\textregistered}}$ 4114 $\times 20$ & $66,249\times$ & $195,859\times$ & $295,591\times$ & $218,909\times$ \\ 
        GPU Tesla P100 & $66,558\times$ & $206,301\times$ & $341,550\times$ & $234,264\times$ \\ \bottomrule
    \end{tabular}
    \end{adjustbox}
    \label{table:speedup}
\end{table}

\section{Additional Figures}



\begin{figure}[h!]
    \centering
    \includegraphics[width=\linewidth]{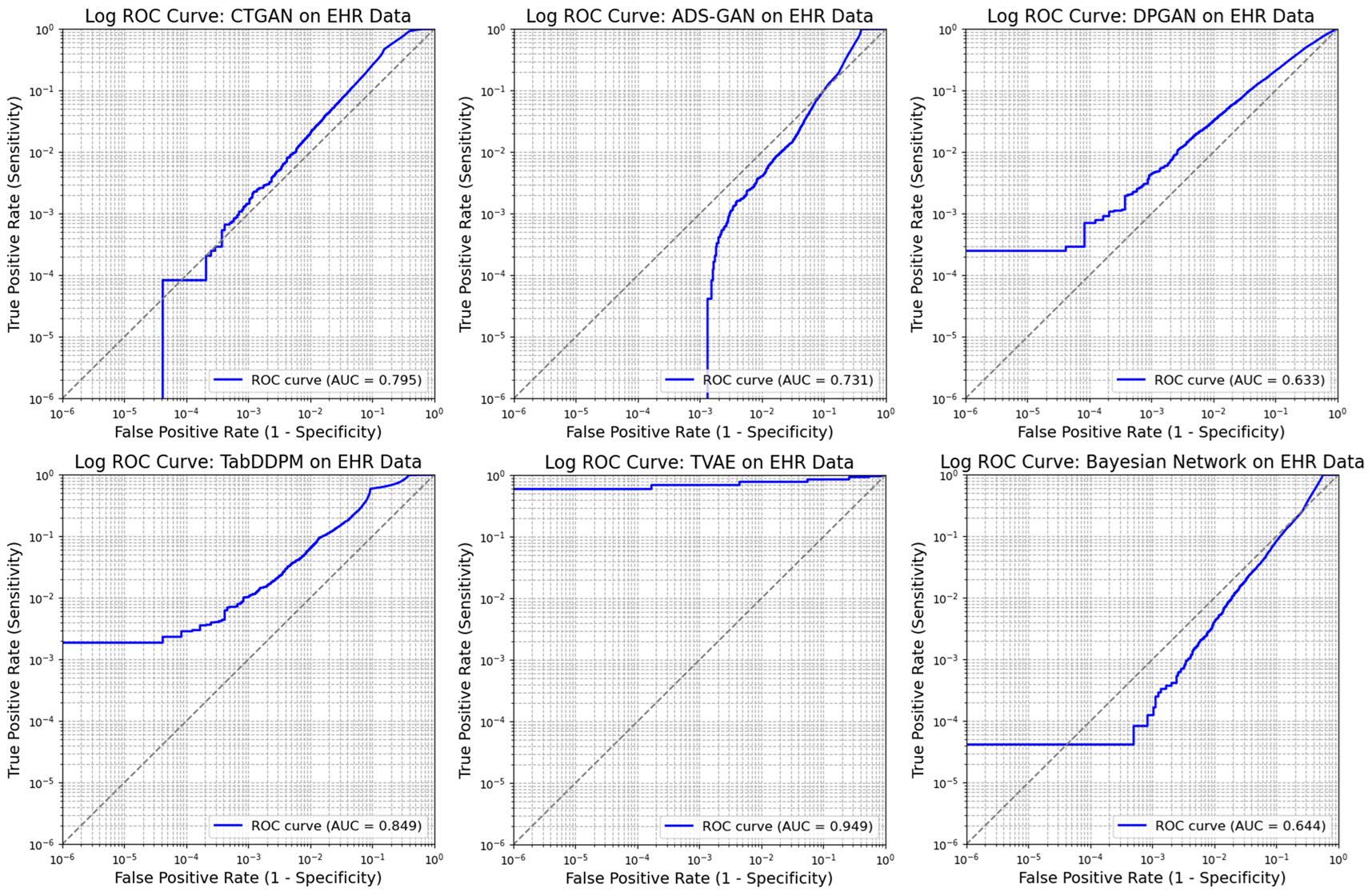}
    \caption{MIMIC-IV data: Log-ROC curves for the true distribution attack.}
    \label{fig:EHR_UB_RawDist_ROC}
    \end{figure}

\begin{figure}[h!] 
    \centering
    \includegraphics[width=\linewidth]{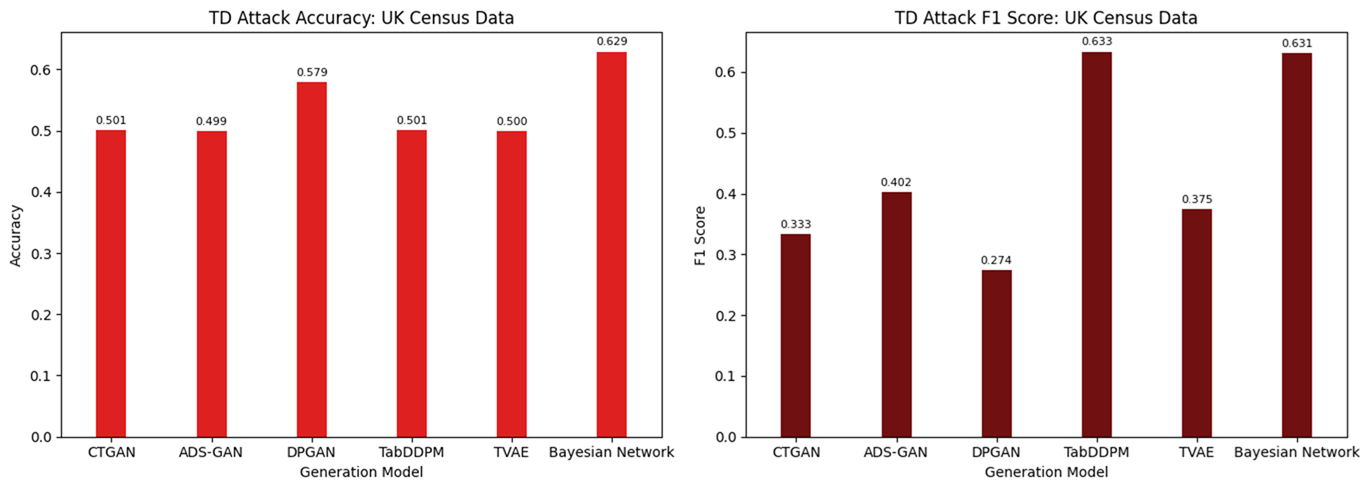}
    \caption{True distribution attack accuracies (left) and F1 scores (right) on the UK Census synthetic datasets.}
    \label{fig:UKCensus_UB_Risks}
    \end{figure}

\begin{figure}[h!]
   \centering
   \includegraphics[width=\linewidth]{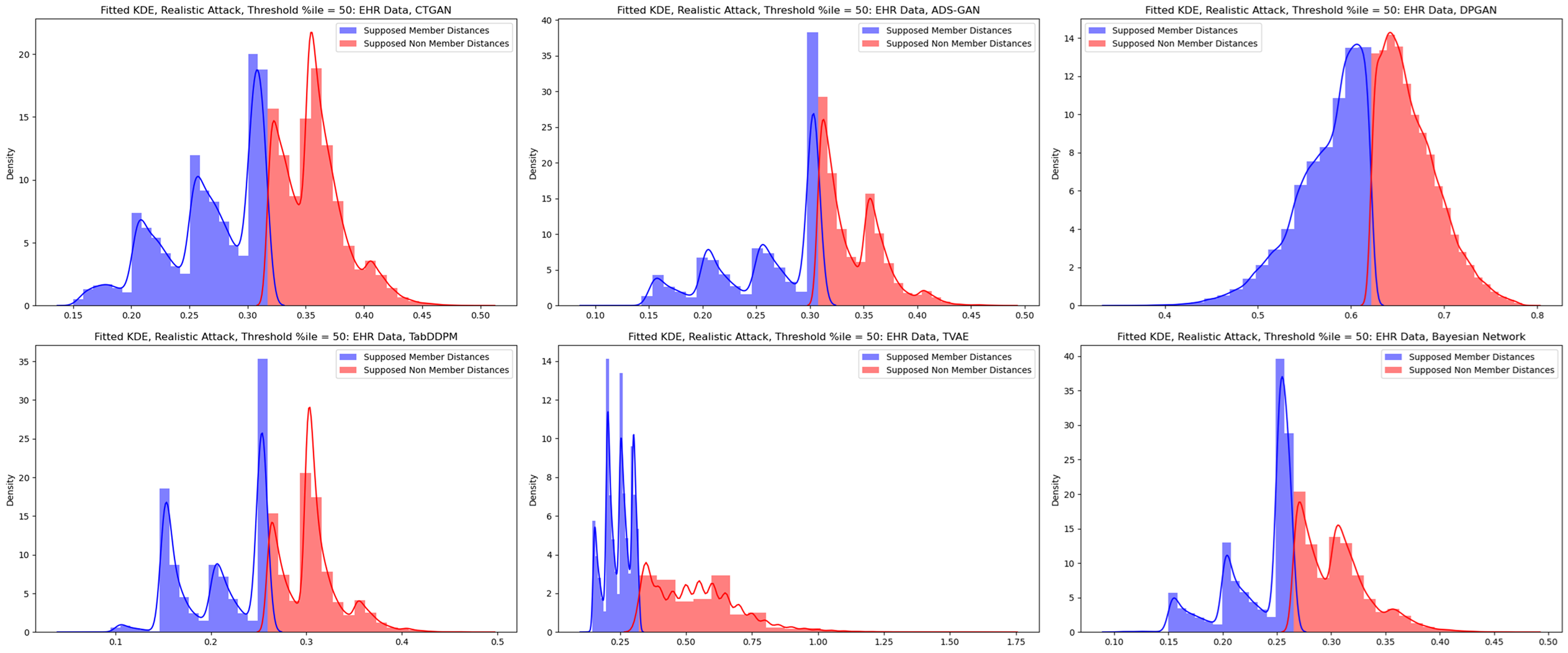}
   \caption{MIMIC-IV data: Distribution of distances for supposed members and supposed non-members determined by setting the distance threshold at 50 percentile. For each synthetic data, there were 56,000 supposed member distances and 56,000 supposed non-member distances in the training set.}
    \label{fig:EHR_Realistic_KDE}
    \end{figure}

\begin{figure}[h!]
    \centering
    \includegraphics[width=\linewidth]{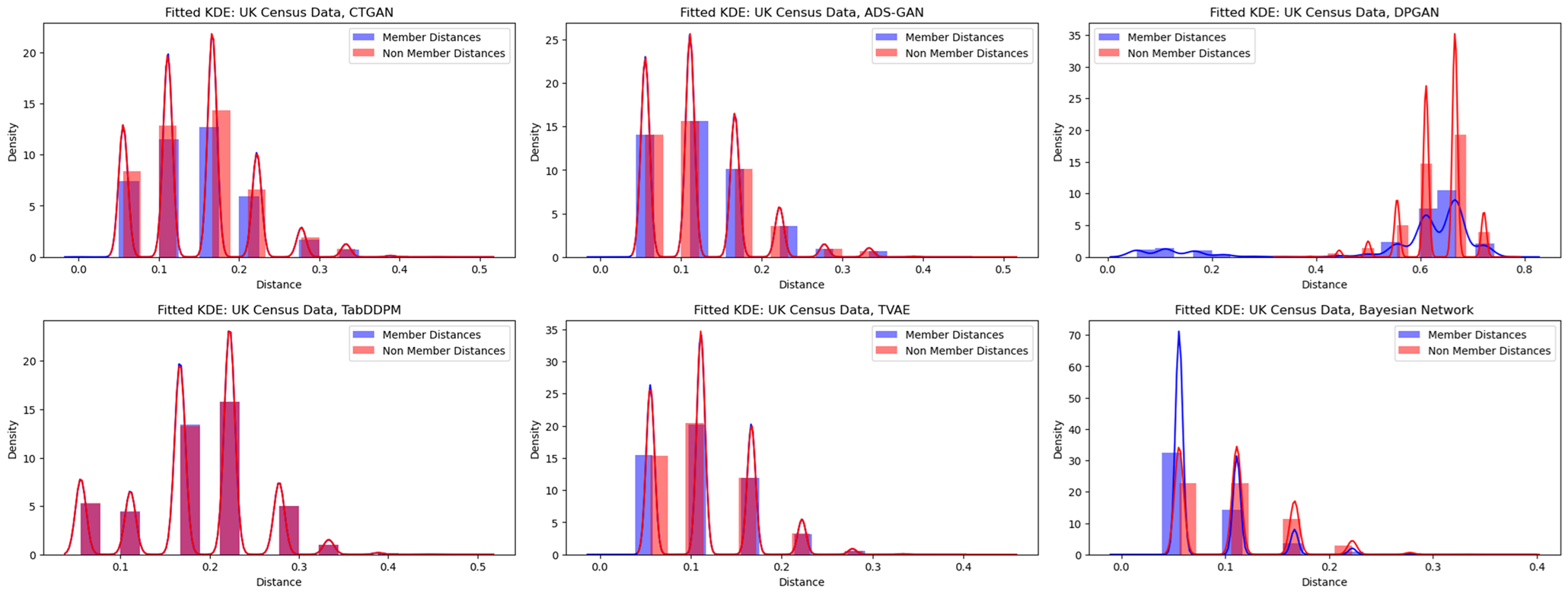}
    \caption{UK Census data: Distribution of member and non-member nearest-neighbor distances between the training records (in $D_{\text{attack}}$) and various synthetic datasets.}
    \label{fig:UKCensus_UB_KDE}
    \end{figure}
    
    \begin{figure}[h!]
    \centering
    \includegraphics[width=\linewidth]{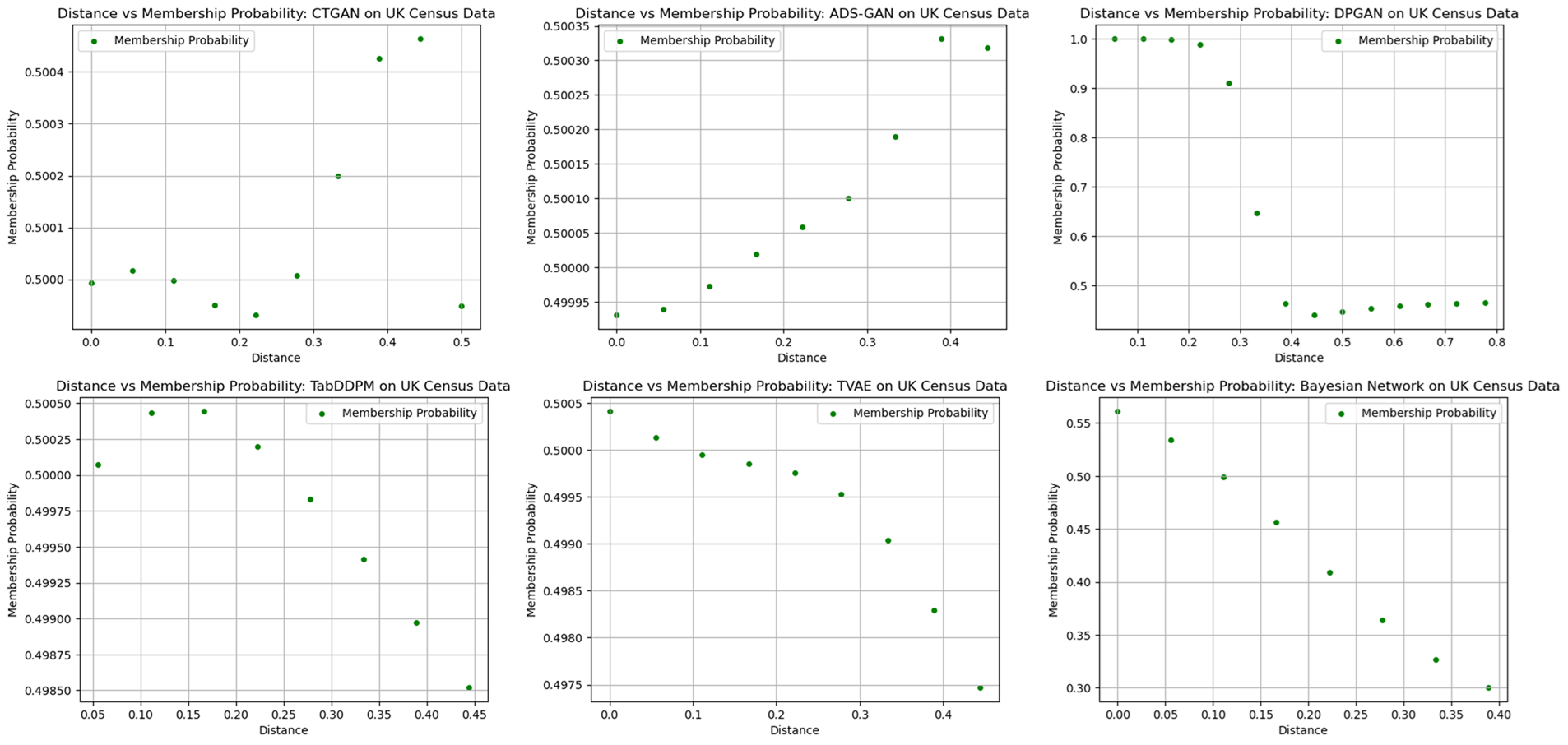}
    \caption{UK Census data: distance vs. membership probability for the test records.}
    \label{fig:UKCensus_UB_DistvsProb}
    \end{figure}

\begin{figure}[h!]
    \centering
    \includegraphics[width=\linewidth]{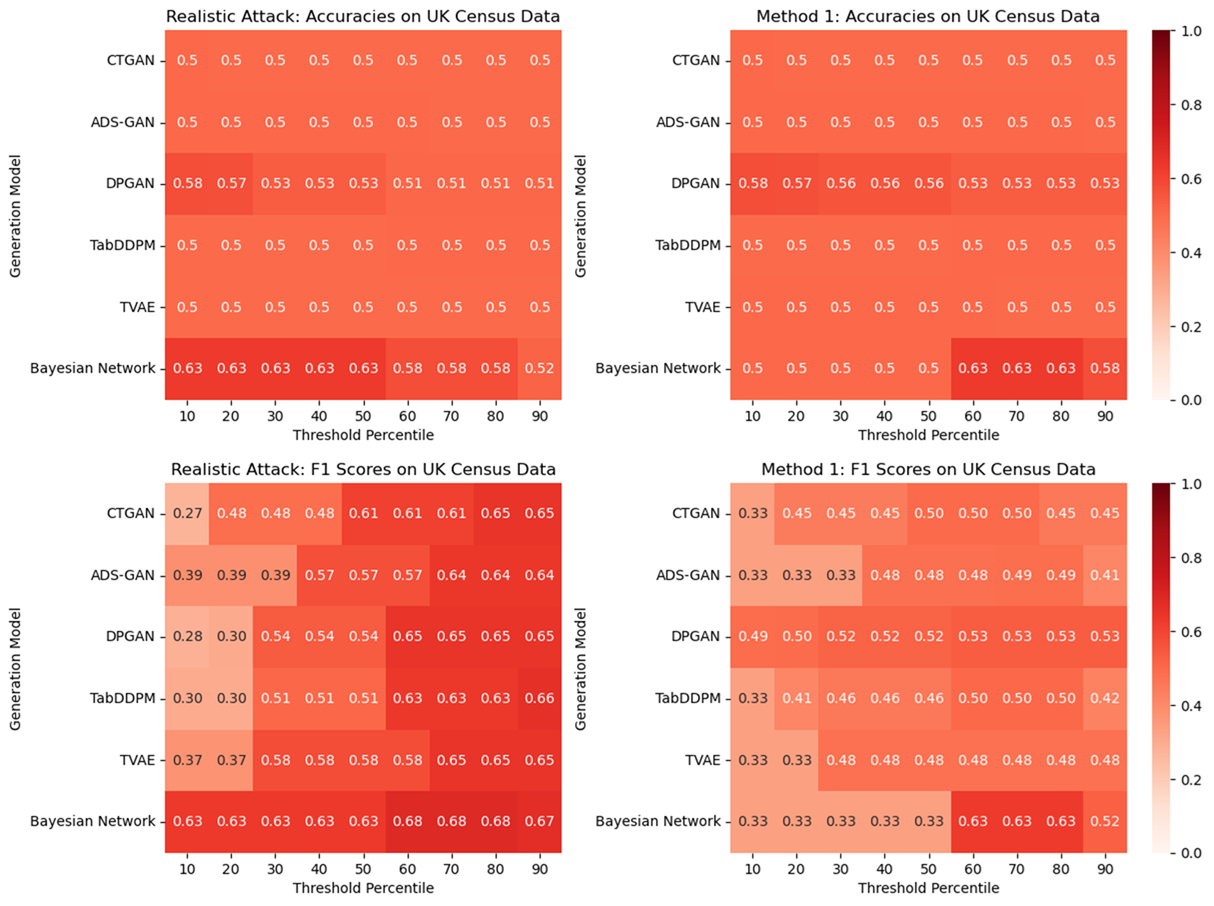}
    \caption{UK Census data: Accuracies (top) and F1 scores (bottom) for realistic attack vs. Method 1, across various distance thresholds.}
    \label{fig:UKCensus_Realistic_Risks}
    \end{figure}

\begin{figure}[h!]
   \centering
   \includegraphics[width=\linewidth]{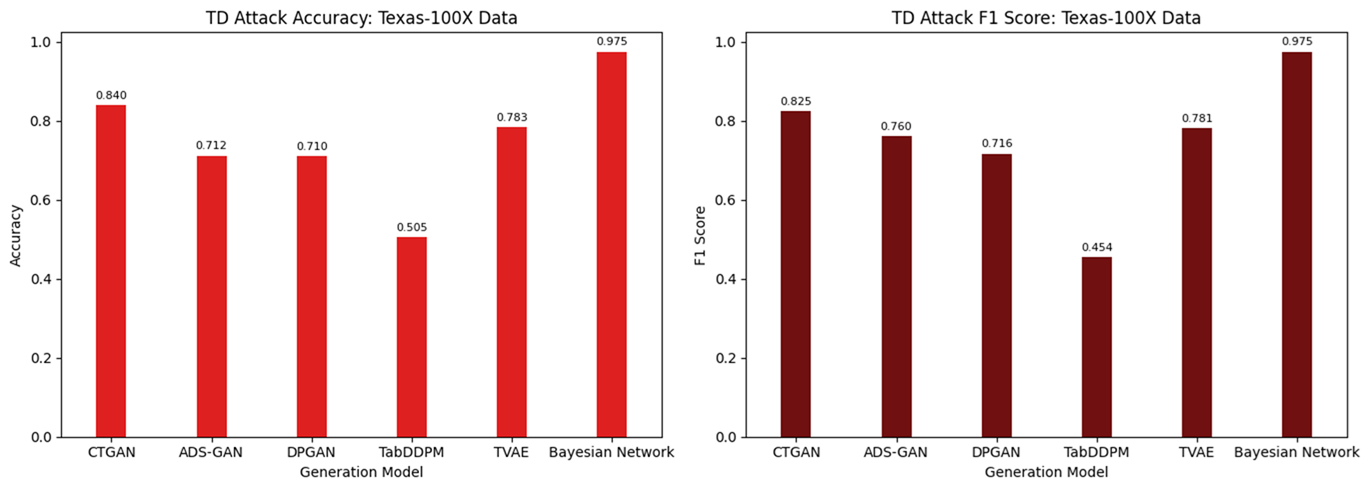}
   \caption{True distribution attack accuracies (left) and F1 scores (right) on the Texas-100X synthetic datasets.}
    \label{fig:Texas_UB_Risks}
    \end{figure}

\begin{figure}[h!]
    \centering
    \includegraphics[width=\linewidth]{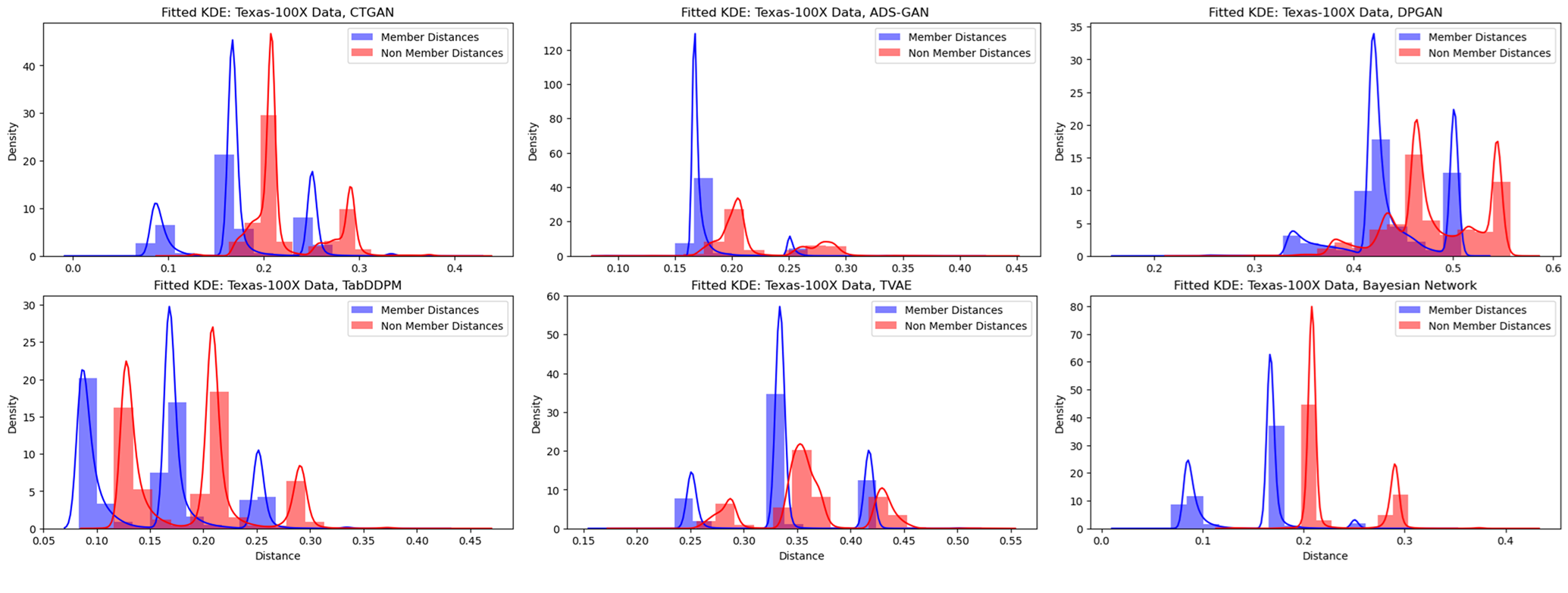}
    \caption{Texas-100X data: Distribution of nearest-neighbor distances between the training records (in $D_{\text{attack}}$) and various synthetic datasets. The member and non-member distances are modeled separately for each synthetic data.}
    \label{fig:Texas_UB_KDE}
    \end{figure}

\begin{figure}[h!]
    \centering
    \includegraphics[width=\linewidth]{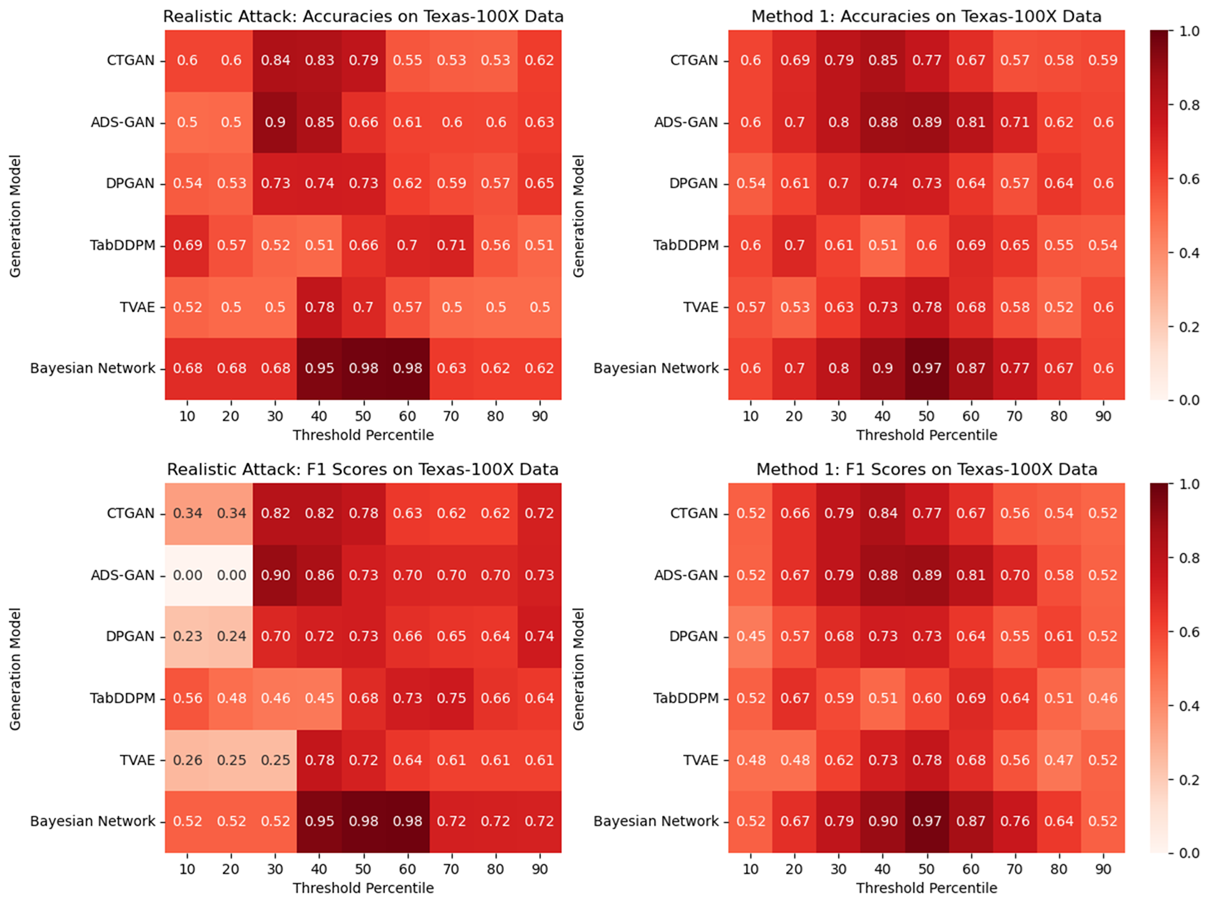}
    \caption{Texas-100X data: Accuracies (top) and F1 scores (bottom) for realistic attack vs. Method 1, across various distance thresholds.}
    \label{fig:Texas_Realistic_Risks}
    \end{figure}

\begin{figure}[h!]
   \centering
   \includegraphics[width=\linewidth]{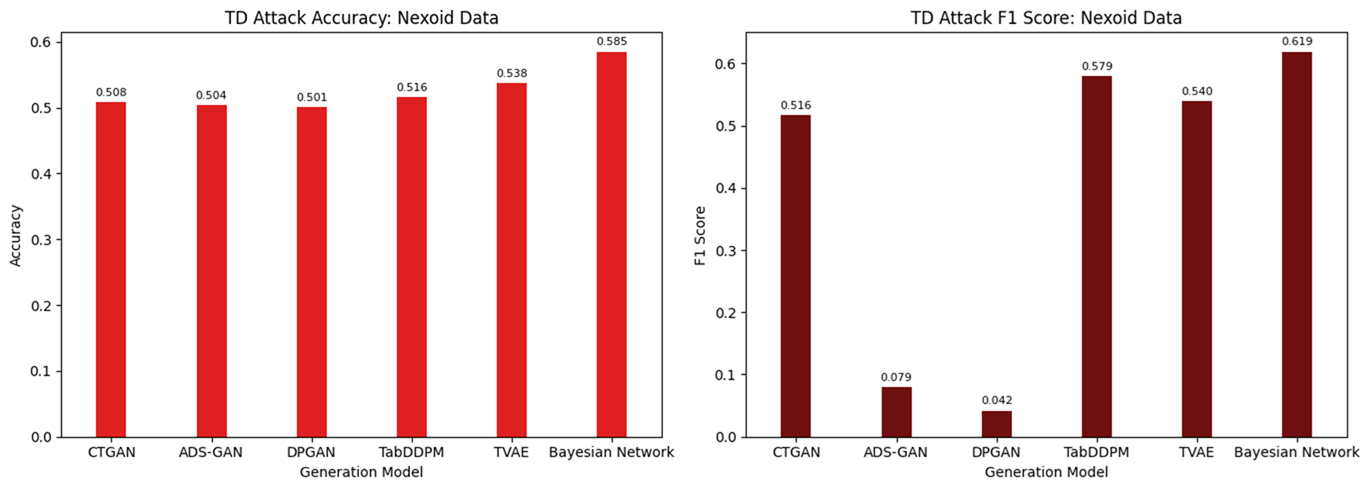}
   \caption{True distribution attack accuracies (left) and F1 scores (right) on the Nexoid synthetic datasets.}
    \label{fig:Nexoid_UB_Risks}
    \end{figure}

\begin{figure}[h!]
    \centering
    \includegraphics[width=\linewidth]{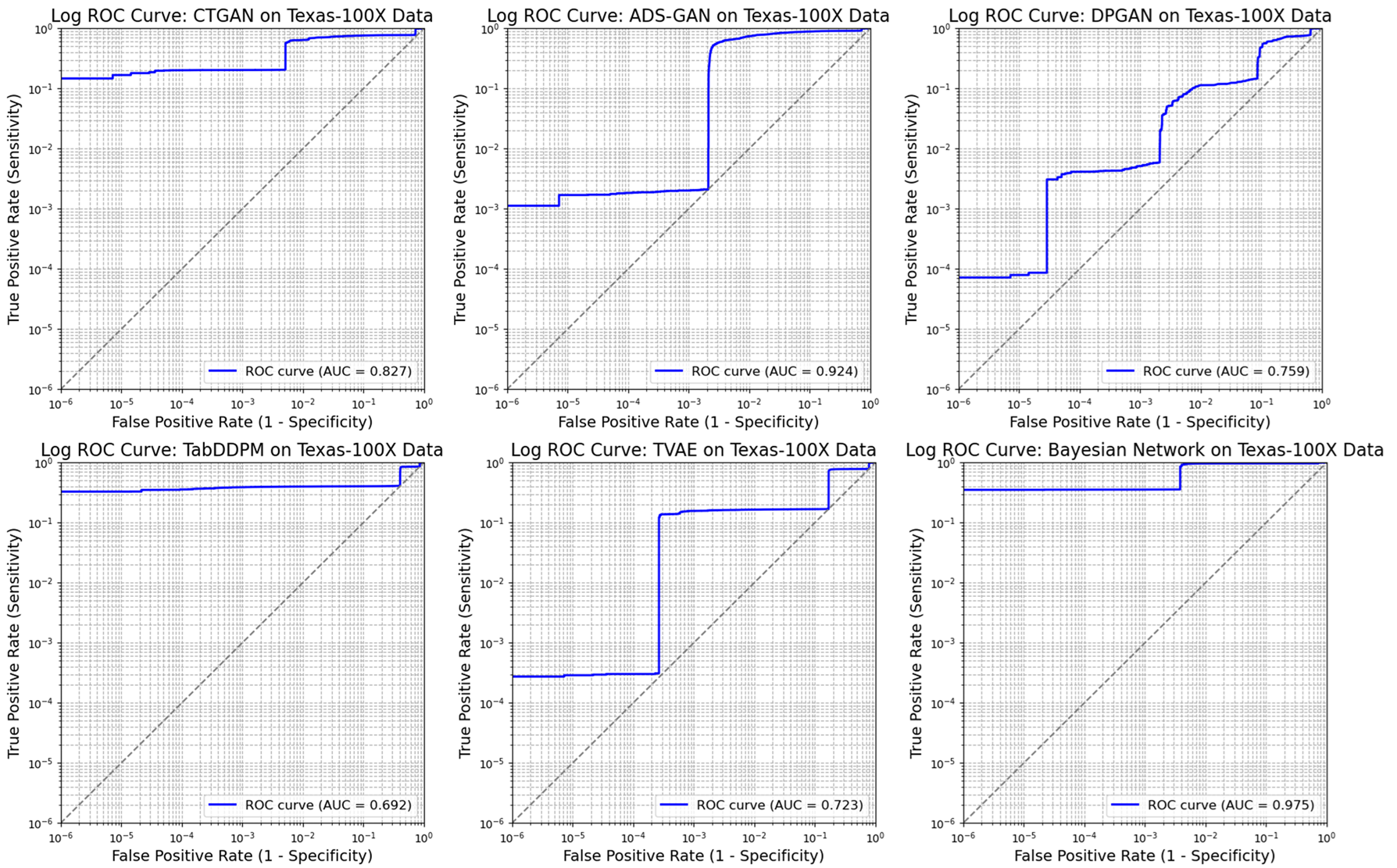}
    \caption{Texas-100X data: Log-ROC curves for the true distribution attack.}
    \label{fig:Texas_UB_RawDist_ROC}
    \end{figure}

\begin{figure}[h!]
    \centering
    \includegraphics[width=\linewidth]{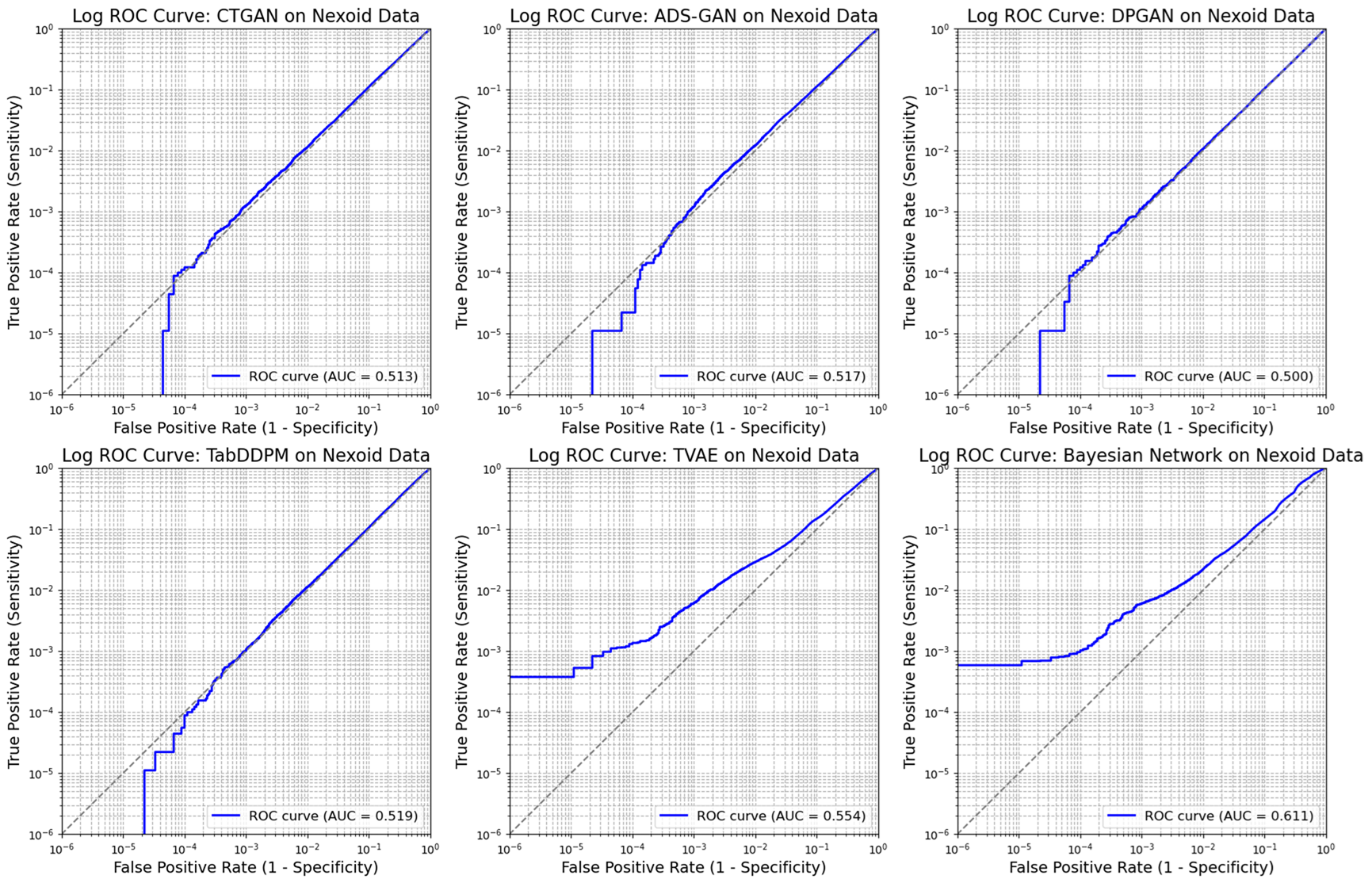}
    \caption{Nexoid data: Log-ROC curves for the true distribution attack.}
    \label{fig:Nexoid_UB_RawDist_ROC}
    \end{figure} 

\begin{figure}[h!]
   \centering
   \includegraphics[width=\linewidth]{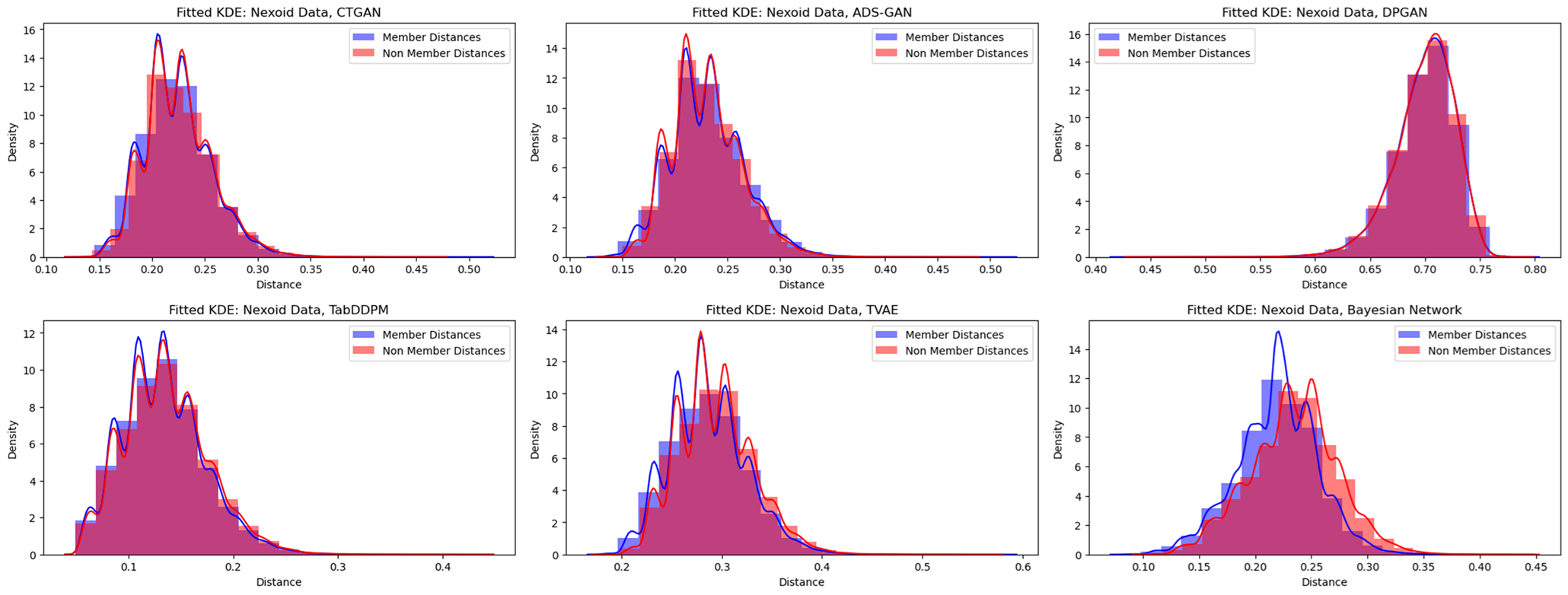}
   \caption{Nexoid data: Distribution of nearest neighbor distances between the training records (in $D_{\text{attack}}$) and various synthetic datasets.}
    \label{fig:Nexoid_UB_KDE}
    \end{figure}
    
    \begin{figure}[h!]
    \centering
    \includegraphics[width=\linewidth]{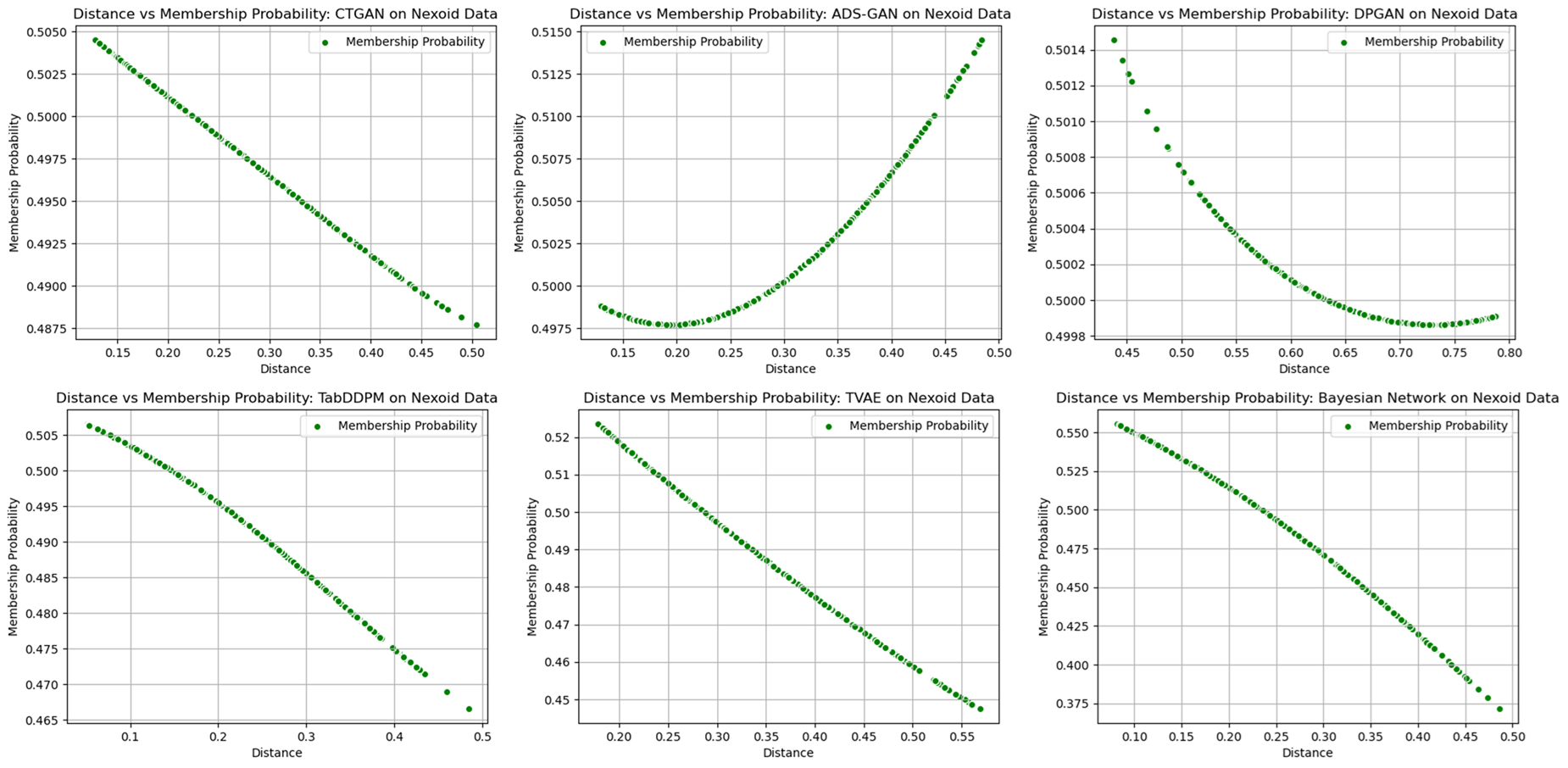}
    \caption{Nexoid data: distance vs. membership probability for the test records.}
    \label{fig:Nexoid_UB_DistvsProb}
    \end{figure}

\begin{figure}[h!]
   \centering
   \includegraphics[width=\linewidth]{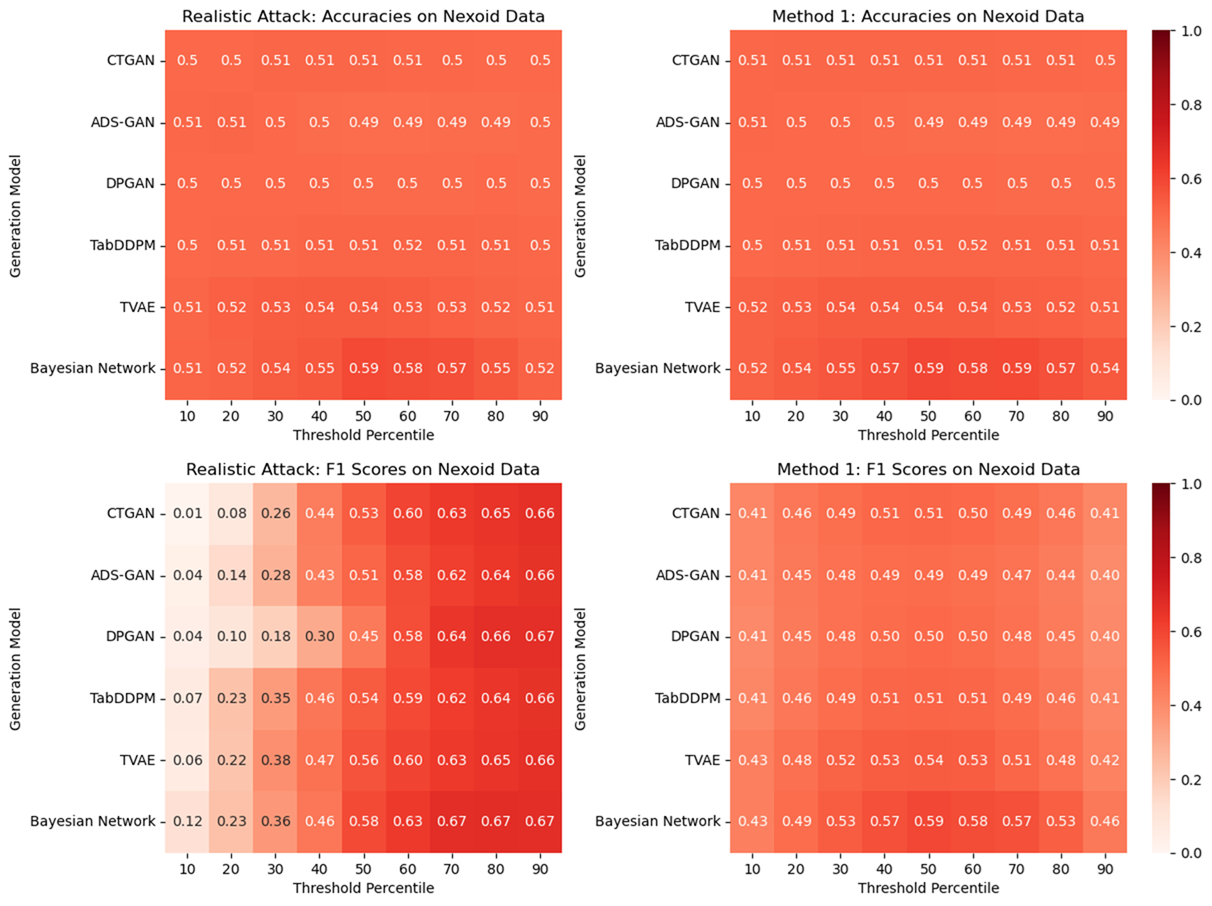}
   \caption{Nexoid data: Accuracies (top) and F1 scores (bottom) for realistic attack vs. Method 1, across various distance thresholds.}
    \label{fig:Nexoid_Realistic_Risks}
    \end{figure}

\end{document}